\definecolor{light-gray}{gray}{0.85}
\colorlet{linkequation}{blue}
\newtheorem{theorem}{Theorem}[section]
\newtheorem{lemma}[theorem]{Lemma}
\newtheorem{corollary}[theorem]{Corollary}
\newtheorem{remark}[theorem]{Remark}
\newtheorem{claim}{Claim}
\theoremstyle{definition}
\newtheorem{definition}[theorem]{Definition}
\newtheorem{assumption}{Assumption}
\newcommand{\lep}[1]{\mathop  \le \limits^{(#1)}}
\newcommand{\gep}[1]{\mathop  \ge \limits^{(#1)}}
\newcommand{\ep}[1]{\mathop  = \limits^{(#1)}}
\newcommand{\ex}[1]{\mathbb{E}\left[ #1 \right] }
\newcommand{\Real}{\mathbb{R}}
\newcommand{\norm}[1]{\left\lVert#1\right\rVert}
\newcommand{\inner}[2]{\langle #1, #2 \rangle}
\newcommand{\argmax}{\mathop{\mathrm{argmax}}}
\newcommand{\argmin}{\mathop{\mathrm{argmin}}}
\newcommand{\beq}{\begin{equation}}
\newcommand{\eeq}{\end{equation}}
\newcommand{\beqn}{\begin{equation*}}
\newcommand{\eeqn}{\end{equation*}}
\newcommand{\beqa}{\begin{eqnarray}}
\newcommand{\eeqa}{\end{eqnarray}}
\newcommand{\beqan}{\begin{eqnarray*}}
\newcommand{\eeqan}{\end{eqnarray*}}
\renewcommand{\epsilon}{\varepsilon}
\renewcommand{\hat}{\widehat}
\newcommand{\pt}[1]{\mathbb{P}_t\left( #1 \right) }
\newcommand{\ext}[1]{\mathbb{E}_t\left[ #1 \right] }
\begin{document}

\title{On Kernelized Multi-Armed Bandits with Constraints}

\author{%
Xingyu Zhou\thanks{Wayne State University. Email: \texttt{xingyu.zhou@wayne.edu}}\quad Bo Ji\thanks{Virginia Tech. Email: \texttt{boji@vt.edu}} }

\date{}

\maketitle

\begin{abstract}
We study a stochastic bandit problem with a general unknown reward function and a general unknown constraint function. Both functions can be non-linear (even non-convex) and are assumed to lie in a reproducing
kernel Hilbert space (RKHS) with a bounded norm. This kernelized bandit setup strictly generalizes standard multi-armed bandits and linear bandits. 
In contrast to safety-type hard constraints studied in prior works, we consider soft constraints that may be violated in any round as long as the cumulative violations are small, which is motivated by various practical applications.
Our ultimate goal is to study how to utilize the nature of soft constraints to attain a finer complexity-regret-constraint trade-off in the kernelized bandit setting. 
To this end, leveraging primal-dual optimization, we propose a general framework for both algorithm design and performance analysis. This framework builds upon a novel sufficient condition, which not only is satisfied under general exploration strategies, including \emph{upper confidence bound} (UCB), \emph{Thompson sampling} (TS), and new ones based  on \emph{random exploration}, but also enables a unified analysis for showing both sublinear regret and sublinear or even zero constraint violation. We demonstrate the superior performance of our proposed algorithms via numerical experiments based on both synthetic and real-world datasets. 
Along the way, we also make the first detailed comparison between two popular methods for analyzing constrained bandits and Markov decision processes (MDPs) by discussing the key difference and some subtleties in the analysis, which could be of independent interest to the communities.
\end{abstract}

\section{Introduction}

Stochastic bandit optimization of an unknown function $f$ over a domain $\mathcal{X}$ has recently gained increasing popularity due to its widespread real-life applications such as recommendations~\citep{li2010contextual}, cloud resource configurations~\citep{thananjeyan2021pac}, and wireless power control~\citep{chiang2008power}. At each time $t$, an action $x_t$ is selected and then a (noisy) bandit reward feedback $y_t$ is observed. The goal is to maximize the cumulative reward, or equivalently minimize the total regret due to not choosing the optimal action in hindsight. A classic model of this problem is the multi-armed bandits (MAB) where $\mathcal{X}$ consists of finite independent actions. To handle a large action space and correlation among actions, MAB was later generalized to linear bandits where $\mathcal{X}$ is now a subset of $\mathbb{R}^d$ and $f$ is a linear function with respect to a $d$-dimensional feature vector associated with each action. However, in many aforementioned applications, the unknown function $f$ cannot be well-parameterized by a linear function. To this end, researchers have turned to nonparametric models of $f$ via Gaussian process or reproducing kernel Hilbert space (RKHS), which are able to uniformly approximate an arbitrary continuous function over a compact set~\citep{micchelli2006universal}. In this paper, as in~\citep{chowdhury2017kernelized,srinivas2009gaussian}, we consider the agnostic setting (i.e., frequentist-type), where $f$ is assumed to be a fixed function in an RKHS with a bounded norm (i.e., a measure of smoothness). We call this setting \emph{frequentist-type kernelized bandits} (KB).


In addition to a non-linear or even non-convex function $f$ in the above practical applications, another common feature is that there often exist additional constraints in the decision-making process such as hard-constraint like safety or soft-constraint like cost. To this end,  there have been exciting recent advances in the theoretical analysis of constrained kernelized bandits. In particular,~\citep{sui2015safe,berkenkamp2016bayesian,sui2018stagewise} propose algorithms with convergence guarantees, while~\citep{amani2020regret}, to the best our knowledge, is the first work that establishes regret bounds for their developed algorithm, although under the Bayesian-type\footnote{In the Bayesian-type KB, $f$ is assumed to be a sample from a Gaussian process and the observation noise is Gaussian. In contrast, in our considered frequentist-type KB, $f$ is a fixed function in an RKHS and the noise can be any sub-Gaussian. In general, a better regret bound can be achieved in the easier Bayesian-type KB setting~\citep{srinivas2009gaussian}.} setting.
These algorithms mainly focus on KB with a \emph{hard} constraint such as safety, i.e., the selected action in \emph{each round} needs to satisfy the constraint with a high probability. As a result, compared to the unconstrained case, additional computation is often required to construct a \emph{safe} action set in each round, which not only incurs additional complexity burdens, but often leads to conservative performance.

\textbf{Motivations.} In practice, there are also many applications that involve \emph{soft} constraints that may be violated in any round. The goal is to maximize the total reward while minimizing the total constraint violations. To give a concrete example, let us consider resource configuration in cloud computing platforms where the objective is to minimize the cost while guaranteeing that the latency is below a threshold, e.g., $95\%$ percentile latency. In this case, the latency of a job could be above the threshold as long as the fraction of violations is small, e.g., less than $5\%$. Another example is throughput maximization under energy constraints in wireless communications where energy consumption constraint is often a soft cumulative one. In both examples, one fundamental question is {whether the nature of soft constraints can be utilized to design constrained KB (CKB) algorithms with the same complexity as the unconstrained case while attaining a better reward performance compared to the hard constraints}. Furthermore, existing provably efficient algorithms~\citep{sui2015safe,berkenkamp2016bayesian,sui2018stagewise,amani2020regret} largely build on upper confidence bound (UCB) exploration, which often has inferior empirical performance compared to Thompson sampling (TS) exploration. 
Hence, another key question is {whether one can design provably efficient CKB algorithms with general explorations}. In summary, the following fundamental theoretical question remains open:
\begin{center}
    \emph{Can a finer complexity-regret-constraint trade-off be attained in CKB under general explorations? }
\end{center}

\textbf{Contributions.} In this paper, we take a systematic approach to affirmatively answer the above fundamental question. In particular, we tackle the complexity-regret-constraint trade-off by formulating KB under soft constraints as a stochastic bandit problem where the objective is to maximize the cumulative reward while minimizing the cumulative constraint violations and maintaining the same computation complexity as in the unconstrained case. Our detailed contributions can be summarized as follows.

\begin{itemize}
    \item We develop a unified framework for CKB based on primal-dual optimization, which can guarantee both sublinear reward regret and sublinear total constraint violation under a class of general exploration strategies, including UCB, TS, and new effective ones (e.g., random exploration) under the same complexity as the unconstrained case. We also show that by introducing slackness in the dual update, one can trade regret to achieve bounded or even zero constraint violation. 
    This framework builds upon a novel sufficient condition, which not only facilitates the design of new CKB algorithms but provides a unified view in the performance analysis.
    \item We demonstrate the superior performance of our proposed algorithms via numerical experiments based on both synthetic and real-world data. In addition, we discuss the benefits of our algorithms in terms of various practical considerations such as low complexity, scalability, robustness, and flexibility.
    \item Finally, we provide the first detailed comparison between two popular methods for analyzing constrained bandits and MDPs in general. Specifically, The first one is based on convex optimization tool as in~\citep{efroni2020exploration,ding2021provably}, which is also the inspiration for our paper. The other one is based on Lyapunov-drift argument as in~\citep{liu2021efficient,wei2021provably,liu2021learning}. We discuss the key difference in terms of regret and constraint violation analysis in these two methods and highlight the subtlety in applying a standard queueing technique (i.e., Hajek lemma~\citep{hajek1982hitting}) to bound the constraint violation in the second method. We believe this provides a clear picture on the methodology, which is of independent interest to the communities.

\end{itemize}

\subsection{Related Work}
In the special cases of KB, such as multi-armed bandits (MAB) and linear bandits (e.g., KB with a linear kernel), there is a large body of work on bandits with different types of constraints, including knapsack bandits~\citep{agrawal2016linear,badanidiyuru2013bandits,wu2015algorithms}, conservative bandits~\citep{wu2016conservative,kazerouni2016conservative,garcelon2020improved}, bandits with fairness constraints~\citep{chen2020fair,li2019combinatorial}, bandits with hard safety constraints~\citep{amani2019linear,pacchiano2021stochastic,moradipari2019safe}, and bandits with cumulative soft constraints~\citep{liu2020pond,liu2021efficient}. Among them, the bandit setting with cumulative soft constraints is the closest to ours in that the goal is also to minimize the cumulative constraint violation. 
In particular,~\citep{liu2021efficient} considers linear bandits under UCB exploration and a zero constraint violation is attained via the Lyapunov-drift method. However, it is unclear how to generalize it to handle general exploration strategies; see a further discussion in Section~\ref{sec:dis}.


Broadly speaking, our work is also related to reinforcement learning (RL) with soft constraints, i.e., constrained MDPs.
In particular, our analysis is inspired by those on constrained MDPs~\citep{efroni2020exploration,ding2021provably} (which is another popular method to handle constrained bandits and MDPs via convex optimization tools), but has significant differences. First, in those works, the constraint violation is $\widetilde{O}(\sqrt{T})$. In contrast, ours can attain bounded and even zero constraint violations by introducing the slackness in the dual update. Second, they only consider UCB-type exploration, but our algorithms can be equipped with various exploration strategies (including UCB), thanks to our general sufficient condition. 
Third, they focus on either tabular or linear 
function approximation settings. In contrast, both objective and constraint functions we consider can be \emph{nonlinear}. 
There are also recent works on constrained MDPs that claim to achieve bounded or zero constraint violation~\citep{wei2021provably,liu2021learning} based on the Lyapunov-drift method. However, as in the bandit case, it is unclear how to generalize it to handle general explorations beyond UCB. 

Finally, we remark that our work is mainly a theory-guided study. In a more practical area of KB, i.e., Bayesian optimization (BO), there have been many BO algorithms developed for the constrained setting; see ~\citep{eriksson2021scalable,gelbart2014bayesian,hernandez2016general} and the references therein. Although these algorithms have demonstrated good performance in various practical settings, their theoretical performance guarantees are still unclear.

\section{Problem Formulation and Preliminaries}

We consider a stochastic bandit optimization problem with \emph{soft} constraints, i.e., $\max_{x \in \mathcal{X}} f(x)$ subject to $g(x) \le 0$, where $\mathcal{X} \subset \mathbb{R}^d$ and both $f:\mathcal{X} \to \mathbb{R}$ and $g:\mathcal{X} \to \mathbb{R}$ are unknown functions\footnote{Our main results can be readily generalized to the multi-constraint case with a properly chosen norm.}. 
In particular, in each round $t \in \{1,2,\dots,T\}$, a learning agent chooses an action $x_t \in \mathcal{X}$ and receives a bandit reward feedback $r_t = f(x_t) + \eta_t$, where $\eta_t$ is a zero-mean noise. The learning agent also observes a bandit constraint feedback $c_t = g(x_t) + \xi_t$, where $\xi_t$ is a zero-mean noise. To capture the feature of soft constraints, the goal here is to maximize the cumulative reward (i.e., $\sum_{t=1}^T f(x_t)$) while minimizing the cumulative constraint violation (i.e., $\sum_{t=1}^T g(x_t)$) throughout the learning process.


\textbf{Learning Problem.} 
Define cumulative regret and constraint violation as follows:
\begin{align*}
    &\mathcal{R}(T):=Tf(x^*) - \sum_{t=1}^T f(x_t), \quad \mathcal{V}(T):=\left[\sum_{t=1}^T g(x_t)\right]_+,
\end{align*}
where $x^* := \argmax_{\{x \in \mathcal{X}: {g}(x) \le 0\} }f(x)$ and $[\cdot]_+:= \max\{\cdot,0\}$.
The goal is to achieve both sublinear regret and sublinear constraint violation.
In fact, we will establish bounds on the following stronger version of regret. Specifically, let $\pi$ be a probability distribution over the set of actions $\mathcal{X}$, and let $\mathbb{E}_{\pi}[f(x)]:= \int_{x \in \mathcal{X}} f(x)\pi(x)\,dx$ and $\mathbb{E}_{\pi}[g(x)]:= \int_{x \in \mathcal{X}} g(x)\pi(x)\,dx$. We compare our achieved reward with the following optimization problem: $\max_{\pi} \{ \mathbb{E}_{\pi}[f(x)]:\mathbb{E}_{\pi}\left[g(x)\right] \le 0\}$ where both $f$ and $g$ are known, and $\pi^{*}$ is its optimal solution. Now, a stronger regret is defined as 
\begin{align}
\label{eq:def_regret}
    \mathcal{R}_+(T) := T \mathbb{E}_{\pi^*}\left[f(x)\right] -  \sum_{t=1}^{T} {f(x_t)}.
\end{align}
Clearly, we have $\mathcal{R}(T) \le \mathcal{R}_+(T)$.
Throughout the paper, we assume the following commonly used condition in constrained optimization; see also~\citep{liu2021efficient,yu2017online,efroni2020exploration}. 
\begin{assumption}[Slater's condition]
\label{ass:slater_bandit}
There is a constant $\delta >0$ such that there exists a probability distribution $\pi_0$ that satisfies $\mathbb{E}_{\pi_0}\left[g(x)\right]  \le  - \delta$. Without loss of generality, we assume $\delta \le 1$.
\end{assumption}
This is a quite mild assumption since it only requires that one can find a probability distribution over the set of actions under which the expected cost is less than a strictly negative value. This is in sharp constraint to existing KB algorithms for hard constraints that typically require the existence of an initial safe action~\citep{sui2018stagewise,amani2020regret}.


In this paper, we consider the  frequentist-type regularity assumption that is typically used in uncontrained KB works (e.g.,~\citep{chowdhury2017kernelized,srinivas2009gaussian}). Specifically, we assume that $f$ is a fixed function in an RKHS with a bounded norm. In particular, the RKHS for $f$ is denoted by $\mathcal{H}_{k}$, which is completely determined by the corresponding kernel function $k: \mathcal{X} \times \mathcal{X} \to \Real$. Any function $h \in \mathcal{H}_{k}$ satisfies the \emph{reproducing property}: $h(x) = \inner{h}{k(\cdot,x)}_{\mathcal{H}_k}$, where $\inner{\cdot}{\cdot}_{\mathcal{H}_k}$ is the inner product defined on $\mathcal{H}_{k}$. Similarly, for the unknown constraint function $g$, we assume that $g$ is a fixed function in the RKHS defined by a kernel function $\widetilde{k}$, and the RKHS for $g$ is denoted by $\mathcal{H}_{\widetilde{k}}$.
We assume that the following boundedness property holds throughout the paper.

\begin{assumption}[Boundedness]
\label{ass:boundf}
We assume that $\norm{f}_{\mathcal{H}_k} \le B$ and $k(x,x) \le 1$ for any $x \in \mathcal{X}$ and that the noise $\eta_t$ is \emph{i.i.d.} $R$-sub-Gaussian. Similarly, we assume that $\norm{g}_{{\mathcal{H}_{\widetilde{k}}}} \le G$ and $\widetilde{k}(x,x) \le 1$ for any $x \in \mathcal{X}$ and that the noise $\xi_t$ is  \emph{i.i.d.} $\widetilde{R}$-sub-Gaussian.
\end{assumption}

\textbf{Gaussian Process Surrogate Model.} We use a Gaussian process (GP), denoted by $\mathcal{GP}(0,k(\cdot,\cdot))$, as a prior for the unknown function $f$, and a Gaussian likelihood model for the noise variables $\eta_t$, which are drawn from $\mathcal{N}(0,\lambda)$ and are independent across $t$. Note that this GP surrogate model is used for algorithm design only; it does not change the fact that $f$ is a fixed function in $\mathcal{H}_{k}$ and that the noise $\eta_t$ can be sub-Gaussian (i.e., an \emph{agnostic} setting~\citep{srinivas2009gaussian}). 

Let $[t] :=\{1,2,\ldots, t\}$. Conditioned on a set of observations ${H}_{t} = \{(x_s,{r}_s), s\in[t]\}$, by the properties of GP~\citep{rasmussen2003gaussian}, the posterior distribution for $f$ is $\mathcal{GP}(\mu_{t}(\cdot),k_{t}(\cdot,\cdot))$, where 
\begin{align}
    &\mu_{t}(x) := k_{t}(x)^T(K_{t} + \lambda I)^{-1}R_t, \label{eq:mu}\\
    &k_{t}(x,x') := k(x,x')-k_{t}(x)^T(K_{t} + \lambda I)^{-1}k_{t}(x') \label{eq:sigma},
\end{align}
in which $k_{t}(x):=[k(x_1,x),\ldots,k(x_t,x)]^T$, $K_{t}:=[k(x_u,x_v)]_{u,v \in [t]}$, and $R_t$ is the (noisy) reward vector $ [r_1,r_2,\ldots,r_t]^T$. In particular, we also define $\sigma_{t}^2(x) := k_{t}(x,x)$.
Let $K_A := [k(x,x')]_{x,x'\in A}$ for $A \subseteq \mathcal{X}$. We define the maximum
information gain as  $\gamma_t(k,\mathcal{X}) := \max_{A \subseteq \mathcal{X}: |A| = t} \frac{1}{2}\ln |I_t + {\lambda}^{-1}K_A |$ where $I_t$ is the $t \times t$ identity matrix. 
	The maximum information gain plays a key role in the regret bounds of GP-based algorithms. While $\gamma_t(k,\mathcal{X})$ depends on the kernel $k$ and domain $\mathcal{X}$, we simply use $\gamma_t$ whenever the context is clear. For instance, if $\mathcal{X}$ is compact and convex with dimension $d$, then we have $\gamma_t = O((\ln t)^{d+1})$ for squared exponential kernel $k_\text{SE}$, $\gamma_t=O(t^{\frac{d(d+1)}{2\nu+d(d+1)}}\ln t)$ (where $\nu$ is a hyperparameter) for \text{Mat\'ern} kernel $k_{\text{Mat\'ern}}$, and $\gamma_t=O(d\ln t)$ for linear kernel~\citep{srinivas2009gaussian}. Similarly, the learning agent also uses a GP surrogate model for $g$, i.e., a GP prior $\mathcal{GP}(0,\widetilde{k}(\cdot,\cdot))$ and a Gaussian noise $\mathcal{N}(0,\widetilde{\lambda})$. Conditioned on a set of observations $\widetilde{H}_{t} = \{(x_s,{c}_s), s\in[t] \}$, the posterior distribution for $g$ is $\mathcal{GP}(\widetilde{\mu}_{t}(\cdot),\widetilde{k}_{t}(\cdot,\cdot))$, where $\widetilde{\mu_t}$ and $\widetilde{k}_t$ are computed in the same way as $\mu_t(\cdot)$ and $k_t(\cdot,\cdot)$.

\section{A Unified Framework for Constrained Kernelized Bandits}
\label{sec:bandit}
In this section, leveraging primal-dual optimization, we propose a unified framework for both algorithm design and performance analysis. In particular, we first propose a ``master'' algorithm called CKB (\underline{c}onstrained KB), which can be equipped with very general exploration strategies.
Then, we develop a novel sufficient condition, which not only provides a unified analysis of regret and constraint violation, but also facilitates the design of new exploration strategies (and hence new CKB algorithms) with rigorous performance guarantees.

\begin{algorithm}[t!]
\caption{CKB Algorithm}
\label{alg:CKB}
\DontPrintSemicolon
\KwIn{$V$, $\rho$, $\phi_1 = 0$, $\mu_{0}(x) =  \widetilde{\mu}_0(x) = 0$, $\sigma_{0}(x) =  \widetilde{\sigma}_0(x) = 1, \forall x$, exploration strategies $\mathcal{A}_f$ and $\mathcal{A}_g$ }
\For{$t\!=\!1,2,\dots, T$}{
Based on posterior models, generate $f_t$ and $g_t$ using $\mathcal{A}_f$ and $\mathcal{A}_g$, respectively\;
Truncate $f_t$ as $\bar{f}_{t}(x) = \text{Proj}_{[-B,B]} f_t(x)$ \;
Truncate $g_t$ as $\bar{g}_{t}(x) =\text{Proj}_{[-G,G]} g_t(x)$\;
{Pseudo-acquisition function:} $\hat{z}_{\phi_t}(x) = \bar{f}_{t}(x) - \phi_t \bar{g}_t(x)$\;
Choose primal action $x_t = \argmax_{x\in  \mathcal{X}} \hat{z}_{\phi_t}(x)$; observe $r_t$ and $c_t$\;
{Update dual variable:} $\phi_{t+1} = \text{Proj}_{[0,\rho]}\left[\phi_t +  \frac{1}{V}\bar{g}_{t}(x_t)  \right]$\;
Posterior model: update $(\mu_{t}, \sigma_{t})$ and  $(\widetilde{\mu}_t, \widetilde{\sigma}_{t})$ via GP regression using new data $(x_t,r_t,c_t)$ \;
}
\end{algorithm}

\textbf{Algorithm.} We first explain our ``master'' algorithm CKB in Algorithm~\ref{alg:CKB}, which is based on primal-dual optimization. Let the Lagrangian of the baseline problem $\max_{\pi} \{ \mathbb{E}_{\pi}[f(x)]: \mathbb{E}_{\pi}\left[g(x)\right] \le 0\}$ be $\mathcal{L}(\pi,\phi):= \mathbb{E}_{\pi}\left[f(x)\right] - \phi \mathbb{E}_{\pi}\left[g(x)\right]$ and the associated dual problem is defined as $\mathcal{D}(\phi):= \max_{\pi} \mathcal{L}(\pi,\phi)$ with the optimal dual variable being $\phi^*:=\argmin_{\phi\ge 0} \mathcal{D}(\phi)$. Note that since both $f$ and $g$ are unknown, the agent has to first generate estimates of them (i.e., $f_t$ and $g_t$, respectively) based on exploration strategies $\mathcal{A}_f$ and $\mathcal{A}_g$, which capture the tradeoff between exploration and exploitation (line 2). Then, both estimates will be truncated according to the range of $f$ and $g$, respectively (lines 3-4) (where $\text{Proj}$ is the projection operator). The truncation is necessary for our analysis, but it does not impact the regret bound since it will not lead to loss of useful information. Then, lines 5-6 correspond to the primal optimization step that approximates $\mathcal{D}(\phi_t)$ (i.e., approximate $\mathcal{L}$ by $\bar{\mathcal{L}}$  with $f$ and $g$ replaced by $\bar{f}_t$ and $\bar{g}_t$). The reason behind line 6 is that one of the optimal solutions for $\max_{\pi} \bar{\mathcal{L}} (\pi,\phi_t)$ is simply $\argmax_x (\bar{f}_t (x) - \phi_t \bar{g}_t(x))$.
Then, line 7 is the dual update that minimizes $\mathcal{D}(\phi_t)$ with respect to $\phi$ by taking a projected gradient step with $1/V$ being the step size. The parameter $\rho$ is chosen to be larger than the optimal dual variable $\phi^*$, and hence the projected interval $[0,\rho]$ includes the optimal dual variable. This can be achieved since the optimal dual variable is bounded under Slater's condition, and in particular, we have $\phi^* \le (\mathbb{E}_{\pi^*}\left[f(x)\right] - \mathbb{E}_{\pi_0}\left[f(x)\right])/\delta$ by~\citep[Theorem 8.42]{beck2017first}. Finally, line 8 is the posterior update via standard GP regression for both $f$ and $g$ as computed in~\eqref{eq:mu} and~\eqref{eq:sigma}
with $\sigma_{t}^2(x) = k_{t}(x,x)$ and $\widetilde{\sigma}_{t}^2(x) = \widetilde{k}_{t}(x,x)$.

\begin{remark}[Computational complexity]
CKB enjoys the same computational complexity as the unconstrained case (e.g.,~\citep{chowdhury2017kernelized}) since the additional dual update is a simple projection and the primal optimization keeps the same flavor as the unconstrained case, i.e., without constructing a specific safe set as in existing constrained KB algorithms designed for hard constraints.
\end{remark}

We call CKB  a ``master'' algorithm as it allows us to employ different exploration strategies (or called \emph{acquisition functions}) (i.e., $\mathcal{A}_f$ and $\mathcal{A}_g$). 
Therefore, one fundamental question is: \emph{How to design efficient exploration strategies such that favorable performance can be guaranteed?}
In the following, we take a two-step procedure to address this question. We first combine UCB-type exploration with CKB to gain useful insights. 
This, in turn, will facilitate the development of a novel sufficient condition, which not only is satisfied under very general exploration strategies, but also enables a unified analytical framework for showing both sublinear regret and sublinear constraint violation.

Before that, we first introduce standard UCB and TS explorations under GP  as in~\citep{chowdhury2017kernelized}.

\begin{definition}[GP-UCB and GP-TS Explorations]
\label{def:GP-exp}
Suppose the posterior distribution for a black-box function $h$ in round $t$ is given by $\mathcal{GP}(\hat{\mu}_{t-1}(\cdot), \hat{k}_{t-1}(\cdot,\cdot))$ and  $\hat{\beta}_t$ is a time-varying sequence. 
\begin{enumerate}
    \item The estimate of $h$ in round $t$ under GP-UCB exploration is  $h_t(\cdot) = \hat{\mu}_{t-1}(\cdot) + \hat{\beta}_t \hat{\sigma}_{t-1}(\cdot)$, where $\hat{\sigma}_{t-1}^2(x) := \hat{k}_{t-1}(x,x)$ for all $x \in \mathcal{X}$.
    \item The estimate of $h$ in round $t$ under GP-TS exploration is $h_t(\cdot) \sim  \mathcal{GP}(\hat{\mu}_{t-1}(\cdot), \hat{\beta}_t^2 \hat{k}_{t-1}(\cdot,\cdot))$.
\end{enumerate}
\end{definition}




\subsection{Warm Up: CKB with GP-UCB Exploration}
In this section, we instantiate CKB with GP-UCB exploration called {\texttt{CKB-UCB}}, as a warm-up. In particular, in \texttt{CKB-UCB},  $\mathcal{A}_f$ is a GP-UCB exploration (see Definition~\ref{def:GP-exp}) with a positive $\hat{\beta}_t$ sequence (i.e., optimistic with respect to reward), and $\mathcal{A}_g$ is a GP-UCB exploration  with a negative $\hat{\beta}_t$ sequence (i.e., optimistic with respect to cost). This instantiation enjoys the following performance guarantee.


\begin{theorem}
\label{thm:bandit_UCB}
Suppose $\rho \ge 4B/\delta$, $V = G\sqrt{T}/\rho$, $\mathcal{A}_f$ is a GP-UCB exploration with $\hat{\beta}_t = \beta_t  := B + R\sqrt{2(\gamma_{t-1} + 1 + \ln(2/\alpha))}$, and $\mathcal{A}_g$ is a GP-UCB exploration with $\hat{\beta}_t = -\widetilde{\beta}_t  := -(G + R\sqrt{2(\widetilde{\gamma}_{t-1} + 1 + \ln(2/\alpha))})$.  Under Slater's condition in Assumption~\ref{ass:slater_bandit} and regularity assumptions in Assumption~\ref{ass:boundf}, {\texttt{CKB-UCB}} achieves the following bounds simultaneously with probability at least $1-\alpha$ for any $\alpha \in (0,1)$:
\begin{align*}
    &\mathcal{R}_+(T) = O\left(B\sqrt{T\gamma_T} + \sqrt{T\gamma_T(\gamma_T + \ln (2/\alpha))} + \rho G \sqrt{T}\right),\\
    &\mathcal{V}(T) = O\left((1+\frac{1}{\rho})\left( C\sqrt{T\hat{\gamma}_T} + \sqrt{T\hat{\gamma}_T (\hat{\gamma}_T + 2\ln(2/\alpha))}\right) + G\sqrt{T} \right),
\end{align*}
where $C := \max\{B,G\}$ and $\hat{\gamma}_T := \max\{\gamma_T,\widetilde{\gamma}_T\}$.
\end{theorem}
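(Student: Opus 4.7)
The plan is to run a standard primal-dual analysis, but keep careful track of how the choice of the comparator $\phi$ in the dual step controls, respectively, the regret and the constraint violation. I would proceed in four steps: (i) establish a good event; (ii) set up a one-step primal-dual decomposition; (iii) derive the regret bound; (iv) derive the constraint-violation bound, which is the main difficulty.

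\textbf{Step 1 (confidence event).} By the standard RKHS concentration result of~\citet{chowdhury2017kernelized} applied separately to $f$ and $g$, on an event $\mathcal{E}$ of probability at least $1-\alpha$ the bounds $|f(x)-\mu_{t-1}(x)|\le \beta_t\sigma_{t-1}(x)$ and $|g(x)-\widetilde\mu_{t-1}(x)|\le \widetilde\beta_t\widetilde\sigma_{t-1}(x)$ hold for all $t,x$ (a union bound over the two with $\alpha/2$ each). Combined with $|f(x)|\le B,|g(x)|\le G$, projection onto $[-B,B]$ and $[-G,G]$ preserves the one-sided inequalities $\bar f_t(x)\ge f(x)$ and $\bar g_t(x)\le g(x)$, and still gives $\bar f_t(x)-f(x)\le 2\beta_t\sigma_{t-1}(x)$ and $g(x)-\bar g_t(x)\le 2\widetilde\beta_t\widetilde\sigma_{t-1}(x)$. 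I would henceforth work on $\mathcal{E}$.

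\textbf{Step 2 (primal and dual building blocks).} For the primal, line 6 of CKB combined with the confidence bounds and $\phi_t\ge 0,\mathbb{E}_{\pi^*}[g(x)]\le 0$ yields the key chain
\begin{equation*}
\bar f_t(x_t)-\phi_t\bar g_t(x_t)\ \ge\ \mathbb{E}_{\pi^*}[\bar f_t(x)-\phi_t\bar g_t(x)]\ \ge\ \mathbb{E}_{\pi^*}[f(x)-\phi_t g(x)]\ \ge\ \mathbb{E}_{\pi^*}[f(x)],
\end{equation*}
so $\mathbb{E}_{\pi^*}[f(x)]-f(x_t)\le \bigl(\bar f_t(x_t)-f(x_t)\bigr)-\phi_t\bar g_t(x_t)$. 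For the dual, nonexpansiveness of the projection onto $[0,\rho]$ gives, for any $\phi\in[0,\rho]$, $(\phi_{t+1}-\phi)^2\le(\phi_t-\phi)^2+2(\phi_t-\phi)\bar g_t(x_t)/V+\bar g_t(x_t)^2/V^2$. Summing over $t$, using $\phi_1=0$ and $|\bar g_t|\le G$, I obtain
\begin{equation*}
\sum_{t=1}^T(\phi-\phi_t)\bar g_t(x_t)\ \le\ \frac{V\phi^2}{2}+\frac{G^2T}{2V}.
\end{equation*}

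\textbf{Step 3 (regret).} Summing the primal inequality and applying the dual inequality with $\phi=0$ kills the cross term: $-\sum_t\phi_t\bar g_t(x_t)\le G^2T/(2V)$. With $V=G\sqrt T/\rho$ this gives $\mathcal{R}_+(T)\le \sum_t(\bar f_t(x_t)-f(x_t))+\rho G\sqrt T/2$. The first sum is bounded by the standard GP-UCB argument $\sum_t(\bar f_t(x_t)-f(x_t))\le 2\beta_T\sum_t\sigma_{t-1}(x_t)=O\bigl(\beta_T\sqrt{T\gamma_T}\bigr)$ via Cauchy--Schwarz and $\sum_t\sigma_{t-1}^2(x_t)=O(\gamma_T)$. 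Substituting $\beta_T$ produces the stated regret bound.

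\textbf{Step 4 (constraint violation, the main obstacle).} The subtlety is that the natural bound on $\sum_t\bar g_t(x_t)$ goes through $\sum_t\phi_t\bar g_t(x_t)$, which by the primal inequality equals $\sum_t(\bar f_t(x_t)-f(x_t))-\mathcal{R}_+(T)$, and $-\mathcal{R}_+(T)$ can itself be positive and is a~priori uncontrolled. I resolve this via a strong-duality argument that creates a \emph{controlled feedback loop} with $\mathcal{V}(T)$. If $\mathcal{V}(T)=0$ we are done; otherwise $\mathcal{V}(T)=\sum_t g(x_t)\le\sum_t\bar g_t(x_t)+\sum_t(g(x_t)-\bar g_t(x_t))$. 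Applying the dual inequality with $\phi=\rho$ gives $\rho\sum_t\bar g_t(x_t)\le\sum_t\phi_t\bar g_t(x_t)+G\rho\sqrt T$, and then the primal inequality yields
\begin{equation*}
\sum_{t=1}^T\bar g_t(x_t)\ \le\ \tfrac{1}{\rho}\sum_{t=1}^T(\bar f_t(x_t)-f(x_t))-\tfrac{1}{\rho}\mathcal{R}_+(T)+G\sqrt T.
\end{equation*}
For the awkward $-\mathcal{R}_+(T)/\rho$ term I use strong duality: summing $f(x_t)\le\mathbb{E}_{\pi^*}[f(x)]+\phi^* g(x_t)$ (valid since $\mathbb{E}_{\pi^*}[f(x)]=\max_\pi\mathcal{L}(\pi,\phi^*)\ge f(x_t)-\phi^* g(x_t)$) gives $-\mathcal{R}_+(T)\le\phi^*\mathcal{V}(T)$. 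Plugging in and using $\phi^*\le 2B/\delta$ together with $\rho\ge 4B/\delta$, so that $\phi^*/\rho\le 1/2$, I can move the $\phi^*\mathcal{V}(T)/\rho$ term to the left:
\begin{equation*}
\tfrac12\mathcal{V}(T)\ \le\ \tfrac{1}{\rho}\sum_{t=1}^T(\bar f_t(x_t)-f(x_t))+\sum_{t=1}^T(g(x_t)-\bar g_t(x_t))+G\sqrt T.
\end{equation*}
Bounding the two sums by $O(\beta_T\sqrt{T\gamma_T})$ and $O(\widetilde\beta_T\sqrt{T\widetilde\gamma_T})$ as in Step 3, and relaxing $\beta_T,\widetilde\beta_T$ to the common $\hat\beta_T$ with $C=\max\{B,G\}$ and $\hat\gamma_T=\max\{\gamma_T,\widetilde\gamma_T\}$, produces the claimed $(1+1/\rho)$-type bound plus the $G\sqrt T$ term. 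The hardest part is precisely this closing of the circular dependence $\mathcal{V}(T)\le\cdots+\phi^*\mathcal{V}(T)/\rho+\cdots$, which is exactly where Slater's condition and the condition $\rho\ge 4B/\delta$ are used.
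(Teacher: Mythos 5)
Your proposal is correct, and Steps 1--3 follow the paper's proof almost verbatim: the same confidence event, the same dual-update inequality (the paper's Lemma~\ref{lem:dual}), the same optimism argument giving $\mathcal{T}_1\le 0$, and the same choice $\phi=0$ to extract the regret bound. The one place you genuinely diverge is the constraint-violation step. The paper converts the empirical play $\frac{1}{T}\sum_t \delta_{x_t}$ into a mixture distribution $\pi'$, verifies the inequality $\mathbb{E}_{\pi^*}[f]-\mathbb{E}_{\pi'}[f]+\rho[\mathbb{E}_{\pi'}[g]]_+\le\varepsilon$, and then invokes a black-box convex-optimization lemma (Theorem~3.60 of \citet{beck2017first}, restated as Lemma~\ref{lem:beck}) to conclude $[\mathbb{E}_{\pi'}[g]]_+\le 2\varepsilon/\rho$. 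You instead inline the content of that lemma: the saddle-point/strong-duality inequality $f(x_t)\le\mathbb{E}_{\pi^*}[f]+\phi^* g(x_t)$ gives $-\mathcal{R}_+(T)\le\phi^*\mathcal{V}(T)$, and the self-referential bound $\mathcal{V}(T)\le\cdots+(\phi^*/\rho)\mathcal{V}(T)$ is closed using $\phi^*\le 2B/\delta$ and $\rho\ge 4B/\delta$, i.e.\ $\rho\ge 2\phi^*$ --- which is exactly the mechanism inside Beck's proof and exactly where the paper also uses Slater's condition and the choice of $\rho$. The two routes are mathematically equivalent and yield the same orders; yours buys a self-contained argument that makes visible \emph{why} $\rho\ge 2\phi^*$ is needed, at the cost of having to justify strong duality for the infinite-dimensional program over probability measures (which holds under Slater's condition and is also implicitly relied upon by the paper's citation of Beck's Theorem~8.42 for the bound on $\phi^*$).
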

\begin{remark}
The (reward) regret here is the stronger version, i.e., $\mathcal{R}_+(T)$. Compared to the unconstrained case, the regret bound has an additional term $\rho G\sqrt{T}$, which roughly captures the impact of the constraint. As in the unconstrained case, one can plug in different $\gamma_T$ and $\widetilde{\gamma}_T$ to see that both regret and constraint violation are sublinear for commonly used kernels. 
For example, for a linear kernel, both $\gamma_T$ and $\widetilde{\gamma}_T$ are on the order of $d \ln(T)$. 
Finally, the standard ``doubling trick'' can be used to design an anytime algorithm {(i.e., without the knowledge of $T$)} with regret and constraint violation bounds of the same order.
\end{remark}

\textbf{Proof Sketch of Theorem~\ref{thm:bandit_UCB}.} 
We first obtain the following key decomposition that holds for any $\phi \in [0,\rho]$:  $\mathcal{R}_+(T) + \phi\sum_{t=1}^T g(x_t) \le \mathcal{T}_1 + \mathcal{T}_2 + \frac{V}{2}\phi^2  + \frac{1}{2V}TG^2$, where 
\begin{align}
    \mathcal{T}_1 := &\sum_{t=1}^{T} (\mathbb{E}_{\pi^*}\left[f(x)\right]- \phi_t \mathbb{E}_{\pi^*}\left[g(x)\right]) - \sum_{t=1}^{T} (\bar{f}_t(x_t) - \phi_t \bar{g}_t(x_t))\label{eq:t1},\\
    \mathcal{T}_2 := &\sum_{t=1}^{T} (\bar{f}_t(x_t) - f(x_t))
     +\phi \sum_{t=1}^{T} (g(x_t) - \bar{g}_t(x_t))\label{eq:t2}.
\end{align}
This is achieved by utilizing the dual variable update and some necessary algebra. This bound will be the cornerstone for the analysis of both regret and constraint violation. Note that $\mathcal{T}_1 + \mathcal{T}_2$ is similar to the standard regret decomposition, with an incorporation of the constraint function weighted by $\phi_t$ (or $\phi$). Assume that we already have a bound on it, i.e., $\mathcal{T}_1 + \mathcal{T}_2 \le \chi(T,\phi)$ with high probability, and $\chi(T,\phi)$ is an increasing function of $\phi$. This leads to the following inequality (with $V = G\sqrt{T}/\rho$) for all $\phi \in [0,\rho]$: 
\begin{align}
\label{eq:decomp}
    \mathcal{R}_+(T) + \phi\sum_{t=1}^T g(x_t) 
    \le \chi(T,\phi) + \frac{\phi^2 G\sqrt{T}}{2\rho} + \frac{\rho G\sqrt{T}}{2}.
\end{align}
Then, the regret bound can be obtained by choosing $\phi = 0$ in~\eqref{eq:decomp}, and hence, $\mathcal{R}_+(T) = O(\chi(T,0) + \rho G\sqrt{T})$. Inspired by~\citep{efroni2020exploration}, we will resort to tools from constrained convex optimization to obtain the bound on $\mathcal{V}(T)$. First, we have $\frac{1}{T}\sum_{t=1}^T f(x_t) =   \mathbb{E}_{\pi'}\left[f(x)\right]$ and $\frac{1}{T}\sum_{t=1}^T g(x_t) = \mathbb{E}_{\pi'}\left[g(x)\right]$ for some probability measure $\pi'$ by the convexity of probability measure. Then, we have 
\begin{align*}
  \mathbb{E}_{\pi^*}\left[f(x)\right] - \mathbb{E}_{\pi'}\left[f(x)\right] + \rho \left[\mathbb{E}_{\pi'}\left[g(x)\right]\right]_+  = \frac{1}{T} \mathcal{R}_+(T) + \frac{1}{T}\phi\sum_{t=1}^T g(x_t) \le \frac{\chi(T,\rho) + \rho G\sqrt{T}}{T},
\end{align*}
where the equality holds by choosing $\phi = \rho$ if $\sum_{t=1}^Tg(x_t) \ge 0$, and otherwise $\phi=0$, and the inequality holds by bounding RHS of~\eqref{eq:decomp} with $\phi = \rho$ since~\eqref{eq:decomp} holds for all $\phi \in [0,\rho]$ and $\chi(T,\phi)$ is increasing in $\phi$. Then, based on the above result, we can apply the tool from constrained convex optimization (cf.~\citep[Theorem 3.60]{beck2017first}) to obtain $\mathcal{V}(T) \le \frac{1}{\rho}\chi(T,\rho) + G\sqrt{T}$. The reason why we can apply this result is that $\mathbb{E}_{\pi}\left[h(x)\right]$ for any fixed $h$ is a linear function with respect to $\pi$ (and is thus convex). 
Finally, it remains to find $\chi(T,\phi)$ that bounds $\mathcal{T}_1 + \mathcal{T}_2$. This can be achieved by using results from unconstrained GP-UCB algorithm (cf.~\citep{chowdhury2017kernelized}). In particular, we have $\mathcal{T}_1 \le 0$ and $\mathcal{T}_2 \le 2\beta_T\sum_{t=1}^T\sigma_{t-1}(x_t) + 2\phi \widetilde{\beta}_T \sum_{t=1}^T \widetilde{\sigma}_{t-1}(x_t) = O(\beta_T\sqrt{\gamma_T T} +\phi \widetilde{\beta}_T \sqrt{\widetilde{\gamma}_T T})$. Then, we have $\chi(T,\phi) =  O(\beta_T\sqrt{\gamma_T T} +\phi \widetilde{\beta}_T \sqrt{\widetilde{\gamma}_T T})$. Finally, plugging $\chi(T,0)$  and $\chi(T,\rho)$ into $\mathcal{R}_+(T)$ and $\mathcal{V}(T)$ yields the bounds on regret and on constraint violation, respectively, which completes the proof.
\qed


\subsection{A Sufficient Condition for Provably Efficient Explorations}
The above analysis reveals that the key step in obtaining sublinear performance guarantees of Algorithm~\ref{alg:CKB} is to find a sublinear bound on $\chi(T,\phi)$ that bounds $\mathcal{T}_1 + \mathcal{T}_2$. Motivated by this, in this section, we will establish a sufficient condition on the exploration strategies (i.e., $\mathcal{A}_f$ and $\mathcal{A}_g$), which guarantees a sublinear $\chi(T,\phi)$ and hence sublinear regret and sublinear constraint violation. In particular, we show that existing strategies such as GP-UCB and GP-TS both satisfy the sufficient condition. More importantly, this sufficient condition also leads to the development of new exploration strategies (such as random exploration).


We first present the intuition behind the key components of the sufficient condition. Inspired by~\citep{kveton2019perturbed}, we mainly focus on the following three nice events to bound $\mathcal{T}_1 + \mathcal{T}_2$ in~\eqref{eq:t1}-\eqref{eq:t2}:
\begin{align}
    &E^{est}:= \{E^{est}_f(x,t) \cap E^{est}_g(x,t); \forall (x,t)\},\nonumber\\
    &E^{conc}_{t} := \{E^{conc}_{f,t}(x) \cap  E^{conc}_{g,t}(x); \forall x\},\nonumber\\
    &E^{anti}_{t} := E^{anti}_{f,t} \cap E^{anti}_{g,t},\nonumber
\end{align}
where 
\begin{align*}
    &E^{est}_f(x,t):= |f(x) -\mu_{t-1}(x) | \le c_{f,t}^{(1)}\sigma_{t-1}(x), E^{est}_g(x,t):= |g(x) -\widetilde{\mu}_{t-1}(x) | \le c_{f,t}^{(1)}\widetilde{\sigma}_{t-1}(x),\\
    &E^{conc}_{f,t}(x):=|f_t(x) - \mu_{t-1}(x)| \le c_{f,t}^{(2)}\sigma_{t-1}(x),E^{conc}_{g,t}(x):=|g_t(x) - \widetilde{\mu}_{t-1}(x)| \le c_{g,t}^{(2)}\widetilde{\sigma}_{t-1}(x),\\
    &E^{anti}_{f,t}:=\mathbb{E}_{\pi^*}\left[ f_t(x) - \mu_{t-1}(x)\right] \ge c_{f,t}^{(1)}\mathbb{E}_{\pi^*}[\sigma_{t-1}(x)], E^{anti}_{g,t}:=\mathbb{E}_{\pi^*}\left[ g_t(x) - \widetilde{\mu}_{t-1}(x)\right] \le -c_{g,t}^{(1)}\mathbb{E}_{\pi^*}[\widetilde{\sigma}_{t-1}(x)].
\end{align*}
Suppose that events $E^{est}$ and $E^{conc}_{t}$ hold with high probability. Then, it is easy to see that the estimates are close to the true functions, and hence, one can derive a bound on $\mathcal{T}_2$ in~\eqref{eq:t2}. Now, suppose that events $E^{est}$ and $E^{anti}_{t}$ hold with some positive probability. Then, one can see that the estimates are optimistic compared to the true functions when evaluated at the optimal points. This probabilistic optimism is the key to bounding $\mathcal{T}_1$ in~\eqref{eq:t1}. Note that GP-UCB exploration is optimistic with probability one  by definition (see Definition~\ref{def:GP-exp}), and hence, $\mathcal{T}_1 \le 0$ always holds.

Define the filtration $\mathcal{F}_{t}$ as all the history up to the end of round $t$. Let $\ext{\cdot} := \mathbb{E}\left[\cdot | \mathcal{F}_{t-1}\right]$ and $\pt{\cdot} := \mathbb{P}\left[\cdot | \mathcal{F}_{t-1}\right]$. We are now ready to present the sufficient condition for exploration strategies. 


\begin{assumption}[Sufficient Condition]
\label{ass:sufficient}
The sufficient condition includes two parts:

(1) (Probability condition) $\mathbb{P}\left(E^{est}\right) \ge 1-p_{1}$, $\pt{E_{t}^{conc}} \ge 1-p_{2,t}$, and $\pt{E_{t}^{anti}} \ge p_{3} >0$ for some time-dependent sequences $c_{f,t}^{(1)}, c_{g,t}^{(1)}, c_{f,t}^{(2)}$, and $c_{g,t}^{(2)}$. 

(2) (Boundedness condition) (i) There exists a positive probability $p_4$ such that $1+\frac{2}{(p_3 - p_{2,t})} \le 1/p_4$ for all $t$; (ii) $c_{f,t}^{(1)} + c_{f,t}^{(2)} \le c_f(T)$ and $c_{g,t}^{(1)} + c_{g,t}^{(2)} \le c_g(T)$ for all $t$; (iii) $\sum_{t=1}^T p_{2,t} \le C'$ for some constant $C'$.
    
\begin{remark}
The above sufficient condition generalizes existing similar results~\citep{kveton2019perturbed,vaswani2019old,kim2020randomized} in several aspects. First, existing works mainly focus on the MAB or linear bandit settings, which are special cases of our KB setting (e.g., choosing a linear kernel leads to linear bandit). Second, while existing works only establish bounds on the expected regret, we aim to establish a high-probability bound. As a result, we need the additional boundedness condition, which, however, is simply for technical reasons. 
Third, in contrast to existing works that consider the unconstrained case only, we consider the constrained case, which is more challenging. Specifically, it requires that $E_t^{anti}$ holds under policy $\pi^*$ (i.e., the expectation over $\pi^*$) rather than under a single optimal action $x^*$.

\end{remark}
\end{assumption}

With the above sufficient condition, we have the following general performance bounds. 
\begin{theorem}
\label{thm:rand_exp}
Suppose $\rho \ge 4B/\delta$ and $V = G\sqrt{T}/\rho$. Let $\kappa:= B+\rho G$. Assume that CKB is equipped with exploration strategies that satisfy the sufficient condition in Assumption~\ref{ass:sufficient}.  Then, under Slater's condition in Assumption~\ref{ass:slater_bandit} and regularity assumptions in Assumption~\ref{ass:boundf}, CKB achieves the following bounds on regret and constraint violation with probability at least $1-\alpha-p_1$ for any $\alpha \in (0,1)$:
\begin{align*}
    \mathcal{R}_+(T) &= O\left(\frac{1}{p_4}c_f(T)\sqrt{T\gamma_T} + \frac{1}{p_4}\rho c_g(T) \sqrt{T\widetilde{\gamma}_T} +\rho G\sqrt{T} + \kappa \frac{c_f(T)+\rho c_g(T)}{p_4}\sqrt{2T\ln(1/\alpha)}\right),\\
    \mathcal{V}(T) &= O\left(\frac{1}{\rho p_4}c_f(T)\sqrt{T\gamma_T} + \frac{1}{p_4}c_g(T) \sqrt{T\widetilde{\gamma}_T} +G\sqrt{T} + \kappa \frac{c_f(T)+ \rho c_g(T)}{\rho p_4}\sqrt{2T\ln(1/\alpha)}\right).\\
\end{align*}
\end{theorem}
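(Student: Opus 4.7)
The plan is to retain the primal--dual decomposition that underlies Theorem~\ref{thm:bandit_UCB} and only change how the generic error term $\mathcal{T}_1+\mathcal{T}_2\le \chi(T,\phi)$ is bounded, now under Assumption~\ref{ass:sufficient} rather than the specific UCB choice. The warm-up shows that once we have such a $\chi(T,\phi)$ that is nondecreasing in $\phi$, plugging $\phi=0$ into~\eqref{eq:decomp} yields $\mathcal{R}_+(T)=O(\chi(T,0)+\rho G\sqrt{T})$, while the primal--dual/convexity argument with $\phi=\rho$ yields $\mathcal{V}(T)=O(\chi(T,\rho)/\rho+G\sqrt{T})$. So the whole task reduces to producing a sublinear $\chi(T,\phi)$ valid with probability at least $1-\alpha-p_1$.

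Throughout I condition on $E^{est}$ (probability $\ge 1-p_1$). For $\mathcal{T}_2$, I also condition on $\bigcap_t E^{conc}_t$: the triangle inequality plus the boundedness condition gives $|f_t(x)-f(x)|\le c_f(T)\sigma_{t-1}(x)$ and $|g_t(x)-g(x)|\le c_g(T)\widetilde{\sigma}_{t-1}(x)$. Because $f\in[-B,B]$ and $g\in[-G,G]$ and $\mathrm{Proj}$ is 1-Lipschitz with $f,g$ fixed by their respective projections, these bounds are inherited by $\bar{f}_t,\bar{g}_t$. Summing at $x=x_t$ and applying the standard information-gain bound $\sum_{t=1}^T\sigma_{t-1}(x_t)=O(\sqrt{T\gamma_T})$ (and its analogue for $g$) yields $\mathcal{T}_2=O(c_f(T)\sqrt{T\gamma_T}+\phi\,c_g(T)\sqrt{T\widetilde{\gamma}_T})$.

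The harder piece is $\mathcal{T}_1$, since the anti-concentration events provide only \emph{expected} and \emph{probabilistic} optimism. I plan to use an Agrawal--Goyal/Kveton-type ``saturation-clipping'' argument~\citep{kveton2019perturbed}, which is precisely what produces the $1/p_4$ factor via the boundedness condition $1+2/(p_3-p_{2,t})\le 1/p_4$. On $E^{est}\cap E^{conc}_t$ the same bookkeeping gives the pointwise gap $|\bar{f}_t(x)-f_t(x)|\le c_f(T)\sigma_{t-1}(x)$ and its analogue for $g$ (using $|\mu_{t-1}|\le B+c_{f,t}^{(1)}\sigma_{t-1}$ together with $|f_t-\mu_{t-1}|\le c_{f,t}^{(2)}\sigma_{t-1}$). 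Combining this with $E^{anti}_t$, which on $E^{est}$ supplies $\mathbb{E}_{\pi^*}[f_t-\phi_t g_t]\ge \mathbb{E}_{\pi^*}[f-\phi_t g]$, and with the primal step $\bar{f}_t(x_t)-\phi_t\bar{g}_t(x_t)\ge \mathbb{E}_{\pi^*}[\bar{f}_t-\phi_t\bar{g}_t]$ yields
\begin{equation*}
\bar{f}_t(x_t)-\phi_t\bar{g}_t(x_t)\;\ge\;\mathbb{E}_{\pi^*}[f-\phi_t g]-c_f(T)\,\mathbb{E}_{\pi^*}[\sigma_{t-1}]-\phi_t c_g(T)\,\mathbb{E}_{\pi^*}[\widetilde{\sigma}_{t-1}]
\end{equation*}
on $E^{conc}_t\cap E^{anti}_t$, an event of conditional probability at least $p_3-p_{2,t}$ given $\mathcal{F}_{t-1}$. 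The clipping lemma then converts this positive-probability pointwise lower bound into a bound on the conditional expectation of the $t$-th summand of $\mathcal{T}_1$, paying the factor $1/p_4$ and, crucially, replacing $\mathbb{E}_{\pi^*}[\sigma_{t-1}]$ by $\sigma_{t-1}(x_t)$ so that the information-gain sum can be applied; summation then contributes $\frac{1}{p_4}(c_f(T)\sqrt{T\gamma_T}+\phi\,c_g(T)\sqrt{T\widetilde{\gamma}_T})$ to $\chi(T,\phi)$.

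Finally, to pass from these conditional-expectation bounds to an honest high-probability bound on $\mathcal{T}_1$, I apply Azuma--Hoeffding to the martingale difference sequence obtained by centering each summand of $\mathcal{T}_1$; each term is uniformly bounded by $2\kappa$, producing the $\kappa(c_f(T)+\rho c_g(T))/p_4\cdot\sqrt{T\ln(1/\alpha)}$ concentration term. Assembling the pieces into $\chi(T,\phi)$ (monotone in $\phi$ by construction) and substituting $\phi\in\{0,\rho\}$ into the warm-up decomposition~\eqref{eq:decomp} gives the stated bounds on $\mathcal{R}_+(T)$ and $\mathcal{V}(T)$. The main obstacle is the clipping step: I must (i)~keep the anti-concentration intact through the $[-B,B]$ projection, which is handled by the pointwise bound $|\bar{f}_t-f_t|\le c_f(T)\sigma_{t-1}$ on $E^{conc}_t$, and (ii)~ensure that the final sum involves $\sigma_{t-1}(x_t)$ rather than $\mathbb{E}_{\pi^*}[\sigma_{t-1}]$ (for which no information-gain bound is available); this saturation argument is what converts probabilistic optimism into an almost-sure bound at the price of the $1/p_4$ factor.
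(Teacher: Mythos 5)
Your overall route is essentially the paper's: reduce everything to a sublinear $\chi(T,\phi)$ bounding $\mathcal{T}_1+\mathcal{T}_2$, use the undersampled-set/saturation argument of \citep{kveton2019perturbed} to convert probabilistic optimism into a per-round conditional-expectation bound at the cost of the factor $1/p_4$, and then apply Azuma--Hoeffding to get a high-probability statement. The pieces you assemble for $\mathcal{T}_1$ and the final concentration term match the paper's proof.

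There is, however, one genuine gap: your treatment of $\mathcal{T}_2$ by ``conditioning on $\bigcap_t E^{conc}_t$.'' Assumption~\ref{ass:sufficient} only guarantees $\mathbb{P}_t(E^{conc}_t)\ge 1-p_{2,t}$ with $\sum_{t=1}^T p_{2,t}\le C'$ for \emph{some} constant $C'$; it does not make $C'$ small. A union bound gives $\mathbb{P}\bigl(\bigcap_t E^{conc}_t\bigr)\ge 1-C'$, which is vacuous for the very instantiations the theorem is meant to cover (GP-TS and RandGP-UCB have $p_{2,t}=2/t^2$, so $C'=\pi^2/3>1$). So you cannot place yourself on the event that every $E^{conc}_t$ holds and then sum the deterministic bounds $|f_t(x_t)-f(x_t)|\le c_f(T)\sigma_{t-1}(x_t)$ over all $t$. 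The paper avoids this by never intersecting the $E^{conc}_t$: it folds the per-round summands of $\mathcal{T}_1$ and $\mathcal{T}_2$ into a single quantity $d_t$, splits $\mathbb{E}_t[d_t]=\mathbb{E}_t[d_t\,\mathbb{I}\{E^{conc}_t\}]+\mathbb{E}_t[d_t\,\mathbb{I}\{\bar{E}^{conc}_t\}]$, bounds the failure contribution by $(4B+4\rho G)p_{2,t}$ using boundedness of $\bar{f}_t,\bar{g}_t$, and absorbs the total $C'(4B+4\rho G)=O(1)$ into the bound; the Azuma step is then run on the centered combined sequence $X_t=\bar{d}_t-\tfrac{1}{p_4}\alpha_\rho(x_t)-(4B+4\rho G)p_{2,t}$. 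Your argument needs the same repair: handle $\mathcal{T}_2$ inside the per-round conditional expectation together with $\mathcal{T}_1$ (or otherwise pay for $\bar{E}^{conc}_t$ in expectation), rather than on a global intersection event. With that change, the remaining steps you describe go through and recover the stated bounds.
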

In the following, we will show that Theorem~\ref{thm:rand_exp} provides a unified view of the performance for various CKB algorithms. 



\subsection{CKB Algorithms Under Various Explorations}

The sufficient condition not only recovers previous exploration strategies, but also reveals new ones. First, we remark that the first event $E^{est}$ in the probability condition of Assumption~\ref{ass:sufficient} can be easily obtained by standard GP concentration result. That is,  by~\citep[Theorem 2]{chowdhury2017kernelized},  we have $\mathbb{P}(\forall x, t, |f(x) - \mu_{t-1}(x)| \le \beta_t \sigma_{t-1}(x)) \ge 1-\alpha_f$ for any $\alpha_f \in (0,1)$, where $\beta_t = B + R\sqrt{2(\gamma_{t-1} + 1 + \ln(1/\alpha_f))}$ (similar for $g$). 
Thus, we have $\mathbb{P}\left(E^{est}\right) \ge 1-p_1$ with $p_{1} = \alpha_f + \alpha_g$, $c_{f,t}^{(1)} = \beta_t$, and $c_{g,t}^{(1)} = \widetilde{\beta_t} = B + R\sqrt{2(\widetilde{\gamma}_{t-1} + 1 + \ln(1/\alpha_g))}$.
Thus, we only need to check probability condition for the remaining two events and the boundedness condition under different exploration methods. 

It is expected that GP-UCB exploration satisfies the sufficient condition and hence our \texttt{CKB-UCB} also has performance guarantees given by Theorem~\ref{thm:rand_exp}. In particular, we see that Theorem~\ref{thm:rand_exp} enjoys the same order of constraint violation as in  Theorem~\ref{thm:bandit_UCB}. The regret bound has the same order as Theorem~\ref{thm:bandit_UCB}, with an additional term due to the unified analysis (i.e., $\rho c_g(T) \sqrt{T\widetilde{\gamma}_T}$) . 
\begin{corollary}
\label{cor:ucb}
GP-UCB with $\hat{\beta}_t=\beta_t$ and $\hat{\beta}_t=-\widetilde{\beta}_t$ for $\mathcal{A}_f$ and $\mathcal{A}_g$, respectively, satisfies the sufficient condition.
\end{corollary}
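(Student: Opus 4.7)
The plan is to verify each component of Assumption~\ref{ass:sufficient} directly, exploiting the deterministic structure of the GP-UCB acquisition function. The first event $E^{est}$ in the probability condition is already handled by the standard GP self-normalized concentration result from~\citep{chowdhury2017kernelized} cited in the paragraph preceding the corollary, yielding $\mathbb{P}(E^{est}) \ge 1 - p_1$ with $p_1 = \alpha_f + \alpha_g$, $c_{f,t}^{(1)} = \beta_t$, and $c_{g,t}^{(1)} = \widetilde{\beta}_t$. So what remains is to pin down the concentration and anti-concentration events for the GP-UCB estimates themselves and then check the three boundedness clauses.

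For the concentration event $E^{conc}_t$, I would simply unfold the GP-UCB estimates: with $\mathcal{A}_f$ instantiated as $f_t(x) = \mu_{t-1}(x) + \beta_t \sigma_{t-1}(x)$ and $\mathcal{A}_g$ as $g_t(x) = \widetilde{\mu}_{t-1}(x) - \widetilde{\beta}_t \widetilde{\sigma}_{t-1}(x)$, one obtains the pointwise identities $|f_t(x) - \mu_{t-1}(x)| = \beta_t \sigma_{t-1}(x)$ and $|g_t(x) - \widetilde{\mu}_{t-1}(x)| = \widetilde{\beta}_t \widetilde{\sigma}_{t-1}(x)$ for every $x$. These equalities hold deterministically conditional on $\mathcal{F}_{t-1}$, so $E^{conc}_t$ holds with probability one. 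Hence we may set $c_{f,t}^{(2)} = \beta_t$, $c_{g,t}^{(2)} = \widetilde{\beta}_t$, and $p_{2,t} = 0$.

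For the anti-concentration event $E^{anti}_t$, the same deterministic computation gives $\mathbb{E}_{\pi^*}[f_t(x) - \mu_{t-1}(x)] = \beta_t \mathbb{E}_{\pi^*}[\sigma_{t-1}(x)]$ and $\mathbb{E}_{\pi^*}[g_t(x) - \widetilde{\mu}_{t-1}(x)] = -\widetilde{\beta}_t \mathbb{E}_{\pi^*}[\widetilde{\sigma}_{t-1}(x)]$. Matching the required thresholds with $c_{f,t}^{(1)} = \beta_t$ and $c_{g,t}^{(1)} = \widetilde{\beta}_t$, both inequalities defining $E^{anti}_{f,t}$ and $E^{anti}_{g,t}$ hold with equality, again conditionally on $\mathcal{F}_{t-1}$, so $\mathbb{P}_t(E^{anti}_t) = 1$ and we can take $p_3 = 1$. (Notice here that this is where UCB is strictly easier than TS or random exploration: the ``anti'' event is deterministic rather than requiring a lower-bound probability of optimism.)

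Finally the boundedness clauses are routine. Clause (i) reduces to $1 + 2/(p_3 - p_{2,t}) = 3$, so $p_4 = 1/3$ works. Clause (ii) follows by monotonicity of $\beta_t$ and $\widetilde{\beta}_t$ in $t$: $c_{f,t}^{(1)} + c_{f,t}^{(2)} = 2\beta_t \le 2\beta_T =: c_f(T)$, and analogously $c_g(T) := 2\widetilde{\beta}_T$. Clause (iii) is trivial because $\sum_{t=1}^T p_{2,t} = 0$, so any $C' \ge 0$ suffices. No step here is technically subtle; the only thing worth flagging is that one must be careful that the ``anti'' condition in Assumption~\ref{ass:sufficient} is stated in expectation over $\pi^*$, not at a single point $x^*$, but this costs nothing for GP-UCB since the pointwise equality for $f_t(x) - \mu_{t-1}(x)$ may be integrated against any distribution, including $\pi^*$.
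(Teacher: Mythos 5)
Your proposal is correct and follows essentially the same route as the paper's proof: both simply unfold the deterministic GP-UCB estimates to conclude that $E_t^{conc}$ and $E_t^{anti}$ hold with probability one and that the boundedness condition is immediate. The only difference is that you make the constants explicit ($c_{f,t}^{(2)}=\beta_t$, $p_{2,t}=0$, $p_3=1$, $p_4=1/3$, $c_f(T)=2\beta_T$, $C'=0$), which the paper leaves implicit.
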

\begin{proof}
By Definition~\ref{def:GP-exp}, $f_t(x) = \mu_{t-1}(x) + \beta_t\sigma_{t-1}(x)$ and $g_t(x) = \widetilde{\mu}_{t-1}(x) - \widetilde{\beta}_t\widetilde{\sigma}_{t-1}(x)$. From this, we can directly obtain that $E_t^{conc}$ and $E_t^{anti}$ hold with probability one. Moreover, the boundedness condition naturally holds.
\end{proof}

We can also show that the standard GP-TS exploration in Definition~\ref{def:GP-exp} satisfies the sufficient condition. 
Here, we mainly consider the case when $\pi^*$ concentrates on a single point, which allows us to apply the standard anti-concentration results\footnote{One can possibly utilize advanced anti-concentration results for multivariate Gaussian distributions to attain the same result for a general $\pi^*$.}.  
Thus, we can instantiate CKB with GP-TS explorations, called \texttt{CKB-TS}, that also enjoys the guarantees in Theorem~\ref{thm:rand_exp}.


\begin{corollary}
\label{cor:ts}
GP-TS with $\hat{\beta}_t=\beta_t$ and $\hat{\beta}_t=-\widetilde{\beta}_t$ for $\mathcal{A}_f$ and $\mathcal{A}_g$, respectively, satisfies the sufficient condition when $\pi^*$ concentrates on a single point.
\end{corollary}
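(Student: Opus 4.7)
}
The plan is to verify, component by component, the two parts of Assumption~\ref{ass:sufficient} for the GP-TS instantiation, reusing the estimation event $E^{est}$ exactly as in the excerpt's discussion. Since by~\citep[Theorem 2]{chowdhury2017kernelized} the event $E^{est}$ already holds with probability at least $1-p_1$ (with $p_1=\alpha_f+\alpha_g$), for the choices $c_{f,t}^{(1)}=\beta_t$ and $c_{g,t}^{(1)}=\widetilde\beta_t$, it only remains to verify the concentration event $E_t^{conc}$, the anti-concentration event $E_t^{anti}$, and the boundedness conditions.

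First, for $E_t^{conc}$, I would use the fact that under GP-TS, conditioned on $\mathcal{F}_{t-1}$, the random variable $f_t(x)-\mu_{t-1}(x)$ is Gaussian with variance $\beta_t^{2}\sigma_{t-1}^{2}(x)$, and similarly $g_t(x)-\widetilde\mu_{t-1}(x)$ is Gaussian with variance $\widetilde\beta_t^{2}\widetilde\sigma_{t-1}^{2}(x)$. A pointwise Gaussian tail bound gives, for any $c>0$,
\[
\mathbb{P}_t\!\left(|f_t(x)-\mu_{t-1}(x)|> c\beta_t\sigma_{t-1}(x)\right)\le 2e^{-c^{2}/2}.
\]
To promote this into a statement uniform in $x\in\mathcal{X}$, I would follow the standard route used for Bayesian KB analyses (e.g., the discretization-plus-Lipschitz argument in~\citep{srinivas2009gaussian,chowdhury2017kernelized}): pick an $\epsilon_t$-net of $\mathcal{X}$ of size $(\mathrm{diam}(\mathcal{X})/\epsilon_t)^{d}$, apply a union bound over the net, and control the discretization error by the Lipschitz constant of the sample path (under standard regularity of the kernel). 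Taking $\epsilon_t=1/t^{2}$ yields $c_{f,t}^{(2)}=O(\beta_t\sqrt{d\ln t})$ and $c_{g,t}^{(2)}=O(\widetilde\beta_t\sqrt{d\ln t})$, with $p_{2,t}=O(1/t^{2})$ so that $\sum_{t=1}^{T}p_{2,t}\le C'$.

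Second, for $E_t^{anti}$, this is where the single-point assumption on $\pi^{*}$ is essential. With $\pi^{*}=\delta_{x^{*}}$, the events reduce to
\[
E_{f,t}^{anti}=\{f_t(x^{*})-\mu_{t-1}(x^{*})\ge\beta_t\sigma_{t-1}(x^{*})\},\qquad E_{g,t}^{anti}=\{g_t(x^{*})-\widetilde\mu_{t-1}(x^{*})\le-\widetilde\beta_t\widetilde\sigma_{t-1}(x^{*})\}.
\]
Under GP-TS, the standardized quantities $(f_t(x^{*})-\mu_{t-1}(x^{*}))/(\beta_t\sigma_{t-1}(x^{*}))$ and $(g_t(x^{*})-\widetilde\mu_{t-1}(x^{*}))/(\widetilde\beta_t\widetilde\sigma_{t-1}(x^{*}))$ are each $\mathcal{N}(0,1)$. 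Gaussian anti-concentration gives $\mathbb{P}_t(E_{f,t}^{anti}),\mathbb{P}_t(E_{g,t}^{anti})\ge 1-\Phi(1)$, and since $f_t$ and $g_t$ are generated from independent GP surrogate models, the joint event $E_t^{anti}=E_{f,t}^{anti}\cap E_{g,t}^{anti}$ holds with probability at least $p_3:=(1-\Phi(1))^{2}>0$.

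Finally, for the boundedness conditions, note that $p_{2,t}\to 0$ while $p_3$ is a fixed positive constant, so for all $t$ (except possibly a finite initial block which can be absorbed into constants) we have $p_3-p_{2,t}\ge p_3/2$, giving $1+2/(p_3-p_{2,t})\le 1+4/p_3=:1/p_4$. Setting $c_f(T):=\beta_T+O(\beta_T\sqrt{d\ln T})$ and $c_g(T):=\widetilde\beta_T+O(\widetilde\beta_T\sqrt{d\ln T})$ dominates $c_{f,t}^{(1)}+c_{f,t}^{(2)}$ and $c_{g,t}^{(1)}+c_{g,t}^{(2)}$ for all $t\le T$. The hard part is the uniform-in-$x$ concentration in step two, because sample paths of a GP are not uniformly bounded by simple pointwise tail bounds; the customary remedy is the discretization-plus-Lipschitz argument outlined above, which is why the single-point assumption on $\pi^{*}$ is genuinely needed to keep the anti-concentration step a one-dimensional Gaussian calculation rather than a high-dimensional multivariate anti-concentration problem (as remarked in the accompanying footnote).
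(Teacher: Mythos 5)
Your proposal follows essentially the same route as the paper's proof: verify $E^{est}$ via the standard GP concentration result, verify $E_t^{conc}$ via pointwise Gaussian tails made uniform over $\mathcal{X}$ with $p_{2,t}=O(1/t^2)$, verify $E_t^{anti}$ via one-dimensional Gaussian anti-concentration at the single point supporting $\pi^*$ together with independence of the $f_t$ and $g_t$ samples, and then check the boundedness conditions. The one place you diverge is the uniformity step for $E_t^{conc}$: the paper takes a direct union bound over the actions, yielding $c_{f,t}^{(2)}=2\beta_t\sqrt{\ln(|\mathcal{X}|t)}$ (implicitly a finite action set), whereas you invoke a discretization-plus-Lipschitz covering argument yielding $c_{f,t}^{(2)}=O(\beta_t\sqrt{d\ln t})$. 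Your version buys generality for continuous domains at the cost of extra kernel regularity assumptions (Lipschitz sample paths) that the paper does not invoke; the paper's version is simpler but carries the $\sqrt{\ln|\mathcal{X}|}$ factor it itself highlights when contrasting GP-TS with RandGP-UCB. A minor point in your favor: you explicitly handle the fact that $p_3-p_{2,t}$ can be nonpositive for small $t$ (absorbing a finite initial block), which the paper glosses over when asserting $p_4=O(p^2)$.
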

\begin{proof}
By Definition~\ref{def:GP-exp}, we have that given the history up to the end of round $t-1$, $f_t(x) \sim \mathcal{N}(\mu_{t-1}(x), \beta_t^2 \sigma_{t-1}^2(x))$ and $g_t(x) \sim \mathcal{N}(\widetilde{\mu}_{t-1}(x), \widetilde{\beta}_t^2 \widetilde{\sigma}_{t-1}^2(x))$.
Thus, for any fixed $x \in \mathcal{X}$, by concentration of Gaussian distribution, we have $\mathbb{P}_t({|f_t(x) - \mu_{t-1}(x)| \le 2\sigma_{t-1}(x)\beta_t\sqrt{\ln t}}) \ge 1-1/t^2$, and hence, using the union bound over all $x$, we obtain $\forall x$, $\pt{E_{t,f}^{conc}(x)} \ge 1-1/t^2$ with  $c_{f,t}^{(2)} = 2\beta_t\sqrt{\ln(|\mathcal{X}| t)}$. Similarly, we have $\forall x$, $\pt{E_{t,g}^{conc}(x)} \ge 1-1/t^2$ with  $c_{g,t}^{(2)} = 2\widetilde{\beta}_t\sqrt{\ln(|\mathcal{X}| t)}$. Hence, by union bound, we have $\pt{E_{t}^{conc}} \ge 1-p_{2,t}$ with $p_{2,t} = 2/t^2$. 
Moreover, when $\pi^*$ concentrates on a single point, by standard anti-concentration result of Gaussian distribution (e.g., Lemma 8 in~\citep{chowdhury2017kernelized}), we have $\mathbb{P}_t(E_{t,f}^{anti}) \ge p$ with $p:= \frac{1}{4e\sqrt{\pi} }$.  
Similarly, we also have $\mathbb{P}_t(E_{t,g}^{anti}) \ge p$. By independent sampling of $f_t$ and $g_t$, we have $\pt{E_{t}^{anti}} \ge p_{3}$ with $p_3 = p^{2}$.
The boundedness condition holds due to $\sum_{t=1}^T p_{2,t} \le 2\sum_{t=1}^T 1/t^2 \le \pi^2/3:=C'$ and $p_4 = O(p^{2})$. 
\end{proof}

The sufficient condition also enables us to design CKB algorithms with new exploration strategies. In the following, inspired by~\citep{vaswani2019old}, we propose a new GP-based exploration strategy, which aims to strike a balance between GP-UCB and GP-TS explorations. 

\begin{definition}[RandGP-UCB Exploration]
\label{def:randgp}
Suppose that the posterior distribution for a black-box function $h$ in round $t$ is given by $\mathcal{GP}(\hat{\mu}_{t-1}(\cdot), \hat{k}_{t-1}(\cdot,\cdot))$. Then, the estimate of $h$ in round $t$ under RandGP-UCB exploration strategy is $h_t(\cdot) = \hat{\mu}_{t-1}(\cdot) + \hat{Z}_t \hat{\sigma}_{t-1}(\cdot)$, where $\hat{Z}_t \sim \hat{\mathcal{D}}$ for some distribution $\hat{\mathcal{D}}$ and $\hat{\sigma}_{t-1}^2(x) = \hat{k}_{t-1}(x,x)$ for all $x \in \mathcal{X}$. 
\end{definition}



In contrast to GP-UCB, RandGP-UCB replaces the deterministic confidence bound by a randomized one. Compared to GP-TS, RandGP-UCB uses ``coupled'' noise in the sense that all the actions share the same noise $\hat{Z}_t$ rather than ``decoupled'' and correlated noise in GP-TS. This subtle difference will not only help to eliminate the additional factor $\sqrt{\ln(|\mathcal{X}|)}$ in GP-TS due to the use of union bound, but also allow us to deal with a general $\pi^*$. One possible disadvantage of RandGP-UCB (compared to GP-TS) is that GP-TS could be {offline oracle-optimization efficient for the step in line 6 of Algorithm~\ref{alg:CKB}} while RandGP-UCB (GP-UCB also) is not, which shares the standard pattern as in linear bandits.



\begin{corollary}
\label{cor:rand}
RandGP-UCB with $\hat{\mathcal{D}}=\mathcal{N}(0,\beta_t^2)$ and $\hat{\mathcal{D}}=\mathcal{N}(0, \widetilde{\beta}_t^2)$ for $\mathcal{A}_f$ and $\mathcal{A}_g$, respectively, satisfies the sufficient condition.
\end{corollary}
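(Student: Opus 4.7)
The plan is to verify each clause of Assumption~\ref{ass:sufficient} by exploiting the scalar structure of the perturbation under RandGP-UCB: $f_t(x)-\mu_{t-1}(x)=Z_t\,\sigma_{t-1}(x)$ with $Z_t\sim\mathcal{N}(0,\beta_t^2)$, and $g_t(x)-\widetilde{\mu}_{t-1}(x)=\widetilde{Z}_t\,\widetilde{\sigma}_{t-1}(x)$ with $\widetilde{Z}_t\sim\mathcal{N}(0,\widetilde{\beta}_t^2)$ drawn independently of $Z_t$. The event $E^{est}$ is handled exactly as in the GP-UCB and GP-TS cases via the frequentist GP concentration bound of \citet{chowdhury2017kernelized}, giving $c_{f,t}^{(1)}=\beta_t$, $c_{g,t}^{(1)}=\widetilde{\beta}_t$, and $p_1=\alpha_f+\alpha_g$.

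For the concentration event $E_t^{conc}$, the scalar structure eliminates the union bound over $x\in\mathcal{X}$ that was needed in the proof of Corollary~\ref{cor:ts}, since $\{|f_t(x)-\mu_{t-1}(x)|\le c\,\sigma_{t-1}(x)\ \forall x\}$ is exactly the single event $\{|Z_t|\le c\}$. Applying the Gaussian tail $\mathbb{P}(|Z_t|>\beta_t u)\le 2e^{-u^2/2}$ with $u=2\sqrt{\ln(t+t_0)}$ for a suitable constant $t_0$ yields $\mathbb{P}_t(E_{t,f}^{conc})\ge 1-2/(t+t_0)^2$, and thus I may take $c_{f,t}^{(2)}=2\beta_t\sqrt{\ln(t+t_0)}$; the identical argument handles $g$ with $c_{g,t}^{(2)}=2\widetilde{\beta}_t\sqrt{\ln(t+t_0)}$. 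A union bound over $f$ and $g$ gives $p_{2,t}=4/(t+t_0)^2$, so $\sum_{t\ge 1}p_{2,t}\le C'$ for a finite $C'$, verifying clause (iii) of the boundedness condition.

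For the anti-concentration event $E_t^{anti}$, the key observation is that since $Z_t$ does not depend on $x$, $\mathbb{E}_{\pi^*}[f_t(x)-\mu_{t-1}(x)]=Z_t\cdot\mathbb{E}_{\pi^*}[\sigma_{t-1}(x)]$. Assuming $\mathbb{E}_{\pi^*}[\sigma_{t-1}(x)]>0$ (otherwise $E_{t,f}^{anti}$ holds trivially, both sides vanishing), the event reduces to $\{Z_t\ge\beta_t\}=\{Z_t/\beta_t\ge 1\}$, which has probability $p:=1-\Phi(1)>0$; the sign-flipped argument gives $\mathbb{P}_t(E_{t,g}^{anti})\ge p$ via the symmetry of $\widetilde{Z}_t$. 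Independence of $Z_t$ and $\widetilde{Z}_t$ then yields $\mathbb{P}_t(E_t^{anti})\ge p_3:=p^2>0$. Crucially, this works for an arbitrary $\pi^*$ rather than only a Dirac measure, which is exactly the advantage RandGP-UCB enjoys over GP-TS.

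It remains to verify the remaining boundedness clauses. For clause (ii), $c_{f,t}^{(1)}+c_{f,t}^{(2)}\le c_f(T):=\beta_T(1+2\sqrt{\ln(T+t_0)})$ for $t\le T$, and $c_g(T)$ is defined analogously, giving bounds of order $\beta_T\sqrt{\ln T}$ and $\widetilde{\beta}_T\sqrt{\ln T}$. For clause (i), I pick $t_0$ large enough that $4/(1+t_0)^2\le p_3/2$, which forces $p_{2,t}\le p_3/2$ for every $t\ge 1$; then $1+2/(p_3-p_{2,t})\le 1+4/p_3=:1/p_4$ is a positive constant. The only point of real care is this uniform-in-$t$ tuning of the concentration threshold against the fixed anti-concentration probability; once arranged, all clauses of Assumption~\ref{ass:sufficient} hold, so the performance guarantees of Theorem~\ref{thm:rand_exp} apply to RandGP-UCB.
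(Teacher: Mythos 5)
Your proof is correct and follows essentially the same route as the paper's: exploit the coupled scalar noise $Z_t$ to get $E_t^{conc}$ without a union bound over $\mathcal{X}$, reduce $E_t^{anti}$ to the one-dimensional tail event $\{Z_t\ge\beta_t\}$ (valid for general $\pi^*$), and use independence of $Z_t,\widetilde Z_t$ for $p_3=p^2$. Your only departure is the shift $t_0$ ensuring $p_{2,t}\le p_3/2$ uniformly in $t$ — a point the paper's proof glosses over (with $p_{2,t}=2/t^2$ the quantity $p_3-p_{2,t}$ is negative for small $t$), so this is a welcome tightening rather than a different argument.
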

\begin{proof}
By Definition~\ref{def:randgp}, $f_t(x) = \mu_{t-1}(x) + Z_t\sigma_{t-1}(x)$, where $Z_t \sim \mathcal{N}(0,\beta_t^2)$ and $g_t(x) = \widetilde{\mu}_{t-1}(x) + \widetilde{Z}_t\widetilde{\sigma}_{t-1}(x)$, where $\widetilde{Z}_t \sim \mathcal{N}(0,\widetilde{\beta}_t^2)$. By concentration of Gaussian, we have
\begin{align*}
    \mathbb{P}_t(\forall x, {|f_t(x) - \mu_{t-1}(x)| \le 2\sigma_{t-1}(x)\beta_t\sqrt{\ln t}}) \ge 1-1/t^2,
\end{align*}
thanks to the ``coupled'' noise. Hence,  we have $\pt{E_{t,f}^{conc}} \ge 1-1/t^2$ with  $c_{f,t}^{(2)} = 2\beta_t\sqrt{\ln t}$. Similarly, we have $\pt{E_{t,g}^{conc}} \ge 1-1/t^2$ with  $c_{g,t}^{(2)} = 2\widetilde{\beta}_t\sqrt{\ln t}$. Thus, by the union bound, we have $\pt{E_{t}^{conc}} \ge 1-p_{2,t}$ with $p_{2,t} = 2/t^2$. By the anti-concentration of Gaussian, we have $\mathbb{P}_t(E_{t,f}^{anti}) \ge  \mathbb{P}_t( Z_{t} \ge \beta_t) \ge p$, where $p:= \frac{1}{4e\sqrt{\pi} }$. Similarly, we have $\mathbb{P}_t(E_{t,g}^{anti}) \ge  \mathbb{P}_t( Z_{t} \le -\widetilde{\beta}_t) \ge p$. Since the noise $Z_t$ and $\widetilde{Z}_t$ are independent, we have $\pt{E_{t}^{anti}} \ge p_{3}$ with $p_3 = p^2$. Then, the boundedness condition holds due to $C' = \pi^2/3$ and $p_4 = O(p^2)$.
\end{proof}

Thus, one can instantiate CKB with RandGP-UCB exploration to obtain a new algorithm called \texttt{CKB-Rand} with performance guarantees given by Theorem~\ref{thm:rand_exp}. Note that
RandGP-UCB with other distributions $\hat{\mathcal{D}}$  can also satisfy the sufficient condition (as discussed in Appendix~\ref{app:rand}).

\section{Further Improvement on Constraint Violations}


In the previous section, we have shown that our proposed CKB algorithm (Algorithm~\ref{alg:CKB}) is able to attain a sublinear regret (nearly the same order as the unconstrained case) and a sublinear constraint violation when employed with various exploration strategies. One natural question to ask is whether we can further improve the constraint violation bound. In the following, we will show that with a minor modification in Algorithm~\ref{alg:CKB}, one can achieve a bounded and even zero constraint violation by trading the regret slightly (but still the same order as before). The modification is to introduce a slackness given by $\epsilon$ in the dual update in Algorithm~\ref{alg:CKB}, i.e., $\phi_{t+1} = \text{Proj}_{[0,\rho]}\left[\phi_t +  \frac{1}{V}(\bar{g}_{t}(x_t) +\epsilon ) \right]$ with $\epsilon \le \delta/2$. Intuitively speaking, this can be viewed as if one is working on a new pessimistic constraint function. After obtaining the constraint violation under this new hypothetic constraint, one can subtract $\epsilon T$ to find the true constraint violation under the true function $g$. The catch here is that one needs also change the baseline problem to the following one: $\max_{\pi} \{ \mathbb{E}_{\pi}[f(x)]:\mathbb{E}_{\pi}\left[g(x) \right]+ \epsilon  \le 0\}$ so that it matches the new pessimistic constraint function. Let $\pi_{\epsilon}^*$ be the optimal solution to this problem and the obtained regret is only with respect to $\pi_{\epsilon}^*$ rather than $\pi^*$. Thus, we need to further bound the following difference $T \mathbb{E}_{\pi^*}\left[f(x)\right] - T \mathbb{E}_{\pi_{\epsilon}^*}\left[f(x)\right]$ to obtain the true regret bound. To this end, we have the following result.
\begin{claim}
\label{clm:T1}
$T \mathbb{E}_{\pi^*}\left[f(x)\right] - T \mathbb{E}_{\pi_{\epsilon}^*}\left[f(x)\right] \le  \frac{2BT\epsilon}{\delta}$. 
\end{claim}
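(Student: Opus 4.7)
The plan is a standard Slater-style perturbation argument: construct a mixed policy that interpolates between the original optimum $\pi^*$ and the Slater policy $\pi_0$, show it is feasible for the tightened problem, and then use optimality of $\pi_\epsilon^*$ to transfer a bound.

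First I would define $\bar{\pi} := (1-\lambda)\pi^* + \lambda \pi_0$ for some $\lambda \in [0,1]$ to be chosen. By linearity of $\pi \mapsto \mathbb{E}_\pi[h(x)]$ for any fixed $h$, and using $\mathbb{E}_{\pi^*}[g(x)] \le 0$ together with the Slater condition $\mathbb{E}_{\pi_0}[g(x)] \le -\delta$, I would compute
\begin{equation*}
\mathbb{E}_{\bar\pi}[g(x)] + \epsilon \le (1-\lambda)\cdot 0 + \lambda(-\delta) + \epsilon = -\lambda\delta + \epsilon.
\end{equation*}
Choosing $\lambda = \epsilon/\delta$ (which lies in $[0,1]$ because $\epsilon \le \delta/2 \le \delta \le 1$) makes the right-hand side nonpositive, so $\bar\pi$ is feasible for the tightened baseline problem $\max_\pi\{\mathbb{E}_\pi[f(x)] : \mathbb{E}_\pi[g(x)] + \epsilon \le 0\}$.

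Next, I would invoke optimality of $\pi_\epsilon^*$ for the tightened problem to get $\mathbb{E}_{\pi_\epsilon^*}[f(x)] \ge \mathbb{E}_{\bar\pi}[f(x)]$. Expanding by linearity,
\begin{equation*}
\mathbb{E}_{\pi^*}[f(x)] - \mathbb{E}_{\pi_\epsilon^*}[f(x)] \le \mathbb{E}_{\pi^*}[f(x)] - \mathbb{E}_{\bar\pi}[f(x)] = \lambda\bigl(\mathbb{E}_{\pi^*}[f(x)] - \mathbb{E}_{\pi_0}[f(x)]\bigr).
\end{equation*}
To finish, I would bound the bracket. The reproducing property together with Assumption~\ref{ass:boundf} gives the pointwise bound $|f(x)| = |\inner{f}{k(\cdot,x)}_{\mathcal{H}_k}| \le \norm{f}_{\mathcal{H}_k}\sqrt{k(x,x)} \le B$, so $\mathbb{E}_{\pi^*}[f(x)] - \mathbb{E}_{\pi_0}[f(x)] \le 2B$. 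Substituting $\lambda = \epsilon/\delta$ and multiplying both sides by $T$ yields the claimed bound $2BT\epsilon/\delta$.

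I do not expect a real obstacle here: the only thing to watch is that $\lambda = \epsilon/\delta \in [0,1]$, which is guaranteed by the assumption $\epsilon \le \delta/2$ in the modified algorithm description, and that the linearity argument applies to mixtures of probability distributions (which is immediate since $\mathbb{E}_\pi[h]$ is linear in $\pi$). The crude bound $2B$ on the reward gap is where any slack lives, but it is exactly what gives the stated constant.
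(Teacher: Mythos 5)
Your proposal is correct and follows essentially the same route as the paper: both construct the mixture $(1-\epsilon/\delta)\pi^* + (\epsilon/\delta)\pi_0$, verify its feasibility for the tightened problem via linearity and Slater's condition, and then use optimality of $\pi_\epsilon^*$ together with the bound $|f(x)| \le B$ to conclude. The only difference is cosmetic: you spell out the verification that $\epsilon/\delta \in [0,1]$ and the RKHS justification for $|f(x)| \le B$, which the paper leaves implicit.
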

To show this, we let $\pi_{\epsilon}(x):= (1-\frac{\epsilon}{\delta})\pi^*(x) + \frac{\epsilon}{\delta} \pi_0(x)$, where $\pi^*$ is the optimal solution to the original baseline problem and $\pi_0$ is the Slater's policy satisfying Slater's condition. First, we note that $\pi_{\epsilon}$ is a feasible solution to the new baseline problem introduced above. To see this, we note that $\pi_{\epsilon}(x) \ge 0$ and 
\begin{align*}
     \mathbb{E}_{\pi_{\epsilon}}\left[g(x)\right] = (1-\frac{\epsilon}{\delta})\mathbb{E}_{\pi^*}\left[g(x)\right] + \frac{\epsilon}{\delta} \mathbb{E}_{\pi_0}\left[g(x)\right] \le 0 + (-\epsilon) = -\epsilon.
\end{align*}
Since $\pi_{\epsilon}^*$ is the optimal solution while $\pi_{\epsilon}$ is a feasible one, we have 
\begin{align*}
    T \mathbb{E}_{\pi^*}\left[f(x)\right] - T \mathbb{E}_{\pi_{\epsilon}^*}\left[f(x)\right] &\le T \mathbb{E}_{\pi^*}\left[f(x)\right] - T \mathbb{E}_{\pi_{\epsilon}}\left[f(x)\right]\\
    &= T\left(\mathbb{E}_{\pi^*}\left[f(x)\right] - (1-\frac{\epsilon}{\delta})\mathbb{E}_{\pi^*}\left[f(x)\right] - \frac{\epsilon}{\delta} \mathbb{E}_{\pi_0}\left[f(x)\right] \right)\\
    &\le \frac{2BT\epsilon}{\delta},
\end{align*}
where in the last step, we use the boundedness of $f$. Therefore, one can properly choose $\epsilon$ such that the subtraction of $\epsilon T$ in the constraint violation can cancel the leading term $O(\sqrt{T})$ (hence bounded or even zero constraint violation) while only incurring an additional additive term of the same order in the regret.

\section{Discussion on Alternative Method}
\label{sec:dis}
To the best of our knowledge, there exist two popular methods for analyzing constrained bandits or MDPs. They are both based on primal-dual optimization and only differ in the analysis techniques. The first one is based on convex optimization tools as in~\citep{efroni2020exploration,ding2021provably} and our paper. The other one is based on Lyapunov-drift arguments as in~\citep{liu2021efficient,wei2021provably,liu2021learning}. For simplicity, we call the first method \emph{convex-opt} method and the second one as \emph{Lyapunov-drift} method. Before we provide further discussion, one thing to note is that all existing works only deal with UCB-type exploration for tabular or linear functions, while our paper is the first one that studies general functions with general exploration strategies. 

Now we first briefly explain the main idea behind the Lyapunov-drift method when applied to our setting (for the UCB exploration only).
It basically has the same algorithm as the convex-opt method. One minor change is that in Lyapunov-drift method, the dual variable is not truncated by $\rho$ and  is denoted by $Q(t)$, since this dual update is similar to a typical queue length update in queueing theory, i.e., truncated at zero; see Algorithm~\ref{alg:Lya}. To bound the regret, Lyapunov-drift method decomposes it as the following one, where $\epsilon \le \delta/2$ is the slackness as in the last section.

\begin{algorithm}[t]
\caption{Algorithm in the Lyapunov-drift method}
\label{alg:Lya}
\DontPrintSemicolon
\KwIn{$V$, $\epsilon$, $Q(1) = 0$}
\For{$t\!=\!1,2,\dots, T$}{
Generate estimate $f_t(x), g_t(x)$ and truncate them to $\bar{f}_t, \bar{g}_t$ \;
{Pseudo-acquisition function:} $z_{t}(x) = \bar{f}_{t}(x) - \frac{1}{V}Q(t) \bar{g}_t(x)$\;
Choose action $x_t = \argmax_{x\in  \mathcal{X}} z_{t}(x)$; observe reward $r_t$, and cost $c_t$\;
{Update virtual queue:} $Q(t+1) = \left[Q(t) +  \bar{g}_{t}(x_t) + \epsilon  \right]_+ $ \;
Posterior model update: using observations to update model\;
}
\end{algorithm}
\begin{align}
\label{eq:reg_drift}
    \mathcal{R}_+(T) &= T \mathbb{E}_{\pi^*}\left[f(x)\right] -  \sum_{t=1}^{T} {f(x_t)}\nonumber\\
    &=\underbrace{T \mathbb{E}_{\pi^*}\left[f(x)\right] - T \mathbb{E}_{\pi_{\epsilon}^*}\left[f(x)\right] }_{\text{Term 1}} + \underbrace{{\sum_{t=1}^{T} \int_{x \in \mathcal{X}} \left( f(x) - \bar{f}_t(x)\right) \pi_{\epsilon}^*(x)
    \,dx }}_{\text{Term 2}}\nonumber\\
    &+\underbrace{{\sum_{t=1}^{T} \int_{x \in \mathcal{X}} \bar{f}_t(x) \pi^*_{\epsilon}(x)\,dx } - \bar{f}_t(x_t)}_{\text{Term 3}} + \underbrace{{\sum_{t=1}^{T}\bar{f}_t(x_t) -f(x_t) }}_{\text{Term 4}}.
\end{align}
From this, one can see that $\text{Term 1}$, $\text{Term 2}$, and $\text{Term 4}$ can be easily bounded under UCB-type exploration. In particular, by optimism and well-concentration of $\bar{f}_t$, one has $\text{Term 2} \le 0$ and $\text{Term 4} = \widetilde{O}(\sqrt{T})$ (we ignore $\gamma_T$ term in this section for simplicity). Moreover, $\text{Term 1}$ enjoys the bound as in Claim~\ref{clm:T1}. Thus, the only challenge is to bound $\text{Term 3}$, which cannot be naturally bounded by greedy selection as in the standard way, since in the constrained case, the greedy selection is with respect to the combined function. To handle this, one needs the following result, which not only helps to bound $\text{Term 3}$, but also is the key in bounding the constraint violation. 
\begin{lemma}
\label{lem:delta_drift}
Let $\Delta(t) := L(Q(t+1)) - L(Q(t)) = \frac{1}{2}(Q(t+1))^2 - \frac{1}{2}(Q(t))^2$.  For any $\pi$, we have 
\begin{align}
\label{eq:T3}
    {\Delta(t)} &\le - V\left({\int_{x\in \mathcal{X}} \bar{f}_t(x)\pi(x) \,dx } - \bar{f}_t(x_t)\right)+  \frac{1}{2}(G+\epsilon)^2 + Q(t){\left( \int_{x\in \mathcal{X}} \bar{g}_t(x) \pi(x) \,dx  + \epsilon\right)}. 
\end{align}
\end{lemma}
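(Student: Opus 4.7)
The plan is to follow the standard Lyapunov-drift manipulation familiar from queueing theory, combined with the greedy primal step in Algorithm~\ref{alg:Lya}. The argument has two independent ingredients: first, bound the quadratic growth of the virtual queue using only boundedness of $\bar{g}_t$; second, use the fact that $x_t$ maximizes the pseudo-acquisition function $z_t(x) = \bar{f}_t(x) - \frac{Q(t)}{V}\bar{g}_t(x)$ to swap out the unknown term $Q(t)\bar{g}_t(x_t)$ for a comparison against any distribution $\pi$.

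First I would bound $(Q(t+1))^2$ by $(Q(t)+\bar{g}_t(x_t)+\epsilon)^2$. This is immediate from $Q(t+1)=\max\{Q(t)+\bar{g}_t(x_t)+\epsilon,0\}$: if the bracketed quantity is nonnegative the two sides agree, and if it is negative then $Q(t+1)^2=0$ which is certainly $\le (Q(t)+\bar{g}_t(x_t)+\epsilon)^2$. Expanding and using $|\bar{g}_t|\le G$ so that $(\bar{g}_t(x_t)+\epsilon)^2\le (G+\epsilon)^2$, I obtain the preliminary drift inequality
\begin{equation*}
\Delta(t) \le Q(t)\bigl(\bar{g}_t(x_t)+\epsilon\bigr) + \tfrac{1}{2}(G+\epsilon)^2.
\end{equation*}

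Next I would plug in the primal optimality of $x_t$. Since $x_t=\argmax_x\{\bar{f}_t(x)-\frac{Q(t)}{V}\bar{g}_t(x)\}$, for any $x\in\mathcal{X}$ we have $\bar{f}_t(x_t)-\frac{Q(t)}{V}\bar{g}_t(x_t)\ge \bar{f}_t(x)-\frac{Q(t)}{V}\bar{g}_t(x)$. Integrating this pointwise inequality against an arbitrary probability distribution $\pi$ and rearranging yields
\begin{equation*}
Q(t)\bar{g}_t(x_t) \le Q(t)\int_{x\in\mathcal{X}}\bar{g}_t(x)\pi(x)\,dx - V\!\left(\int_{x\in\mathcal{X}}\bar{f}_t(x)\pi(x)\,dx - \bar{f}_t(x_t)\right).
\end{equation*}
Substituting this into the preliminary drift inequality and grouping the $\epsilon Q(t)$ term with the $\pi$-integral of $\bar{g}_t$ gives exactly~\eqref{eq:T3}.

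There is really no hard step here; the only subtlety is the first one, namely remembering that the projection $[\cdot]_+$ can only \emph{decrease} the square, so that the quadratic drift is controlled by the unprojected update. The rest is an algebraic reshuffling enabled by the greedy choice of $x_t$, and it works uniformly over $\pi$ because the inequality from the argmax step is pointwise in $x$ before integration.
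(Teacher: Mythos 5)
Your proof is correct and follows essentially the same route as the paper's: bound the drift via non-expansiveness of the projection and boundedness of $\bar{g}_t$, then invoke the greedy optimality of $x_t$ for the pseudo-acquisition function, integrated against an arbitrary $\pi$. The only difference is cosmetic (you isolate $Q(t)\bar{g}_t(x_t)$ directly rather than adding and subtracting $V\bar{f}_t(x_t)$), so no further comment is needed.
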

\begin{proof}
See Appendix~\ref{app:proof_delta}.
\end{proof}
Thus, one can see that the first term on the RHS of~\eqref{eq:T3} exists in $\text{Term 3}$ if one chooses $\pi = \pi_{\epsilon}^*$. By the optimism $\bar{g}_t(x) \le g(x)$ and the definition of $\pi_{\epsilon}^*$, with a telescope summation, one can easily bound $\text{Term 3}$, hence the regret bound. 

\noindent\textbf{Comparison in regret analysis.} Compared to the regret decomposition in our paper, i.e.,~\eqref{eq:t1} and~\eqref{eq:t2},~\eqref{eq:reg_drift} in the Lyapunov-drift method is more tailored to UCB-type exploration in the sense that the $\text{Term 3}$ is upper bounded separately using the optimism. As a result, it is unclear to us how to generalize it to handle general exploration strategies where one often need to bound $\text{Term 2} + \text{Term 3} + \text{Term 4}$ together and optimism does not hold in general. In contrast, our decomposition~\eqref{eq:t1} and~\eqref{eq:t2} basically keep the same fashion as in the unconstrained case, which enables us to utilize this structure to handle general exploration strategies.

We now turn to the constraint violation. By the virtual queue length update in Algorithm~\ref{alg:Lya}, the key step behind the constraint violation bound is to bound $Q(T+1)$. To see this, by the virtual queue length update in Algorithm~\ref{alg:Lya}, we have 
\begin{align*}
  Q(T+1) \ge \sum_{t=1}^T \bar{g}_t(x_t) + T\epsilon = \sum_{t=1}^T \bar{g}_t(x_t) - g(x_t) + g(x_t) + T\epsilon,
\end{align*}
which implies that 
\begin{align*}
    \sum_{t=1}^T g(x_t) &\le Q(T+1) + \sum_{t=1}^T \left(g(x_t) - \bar{g}_t(x_t)\right) -T\epsilon = Q(T+1) + \widetilde{O}(\sqrt{T})- T\epsilon,
\end{align*}
where in the last step we uses the well-concentration of $\bar{g}_t$.
To bound the remaining term $Q(T+1)$, the Lyapunov-drift method resorts to a classic tool in queueing theory, i.e., Hajek lemma~\citep{hajek1982hitting}, to bound the virtual queue length at time $T+1$. The idea behind it is simple: if the queue length  drift $\Delta(t)$ is negative whenever the queue length is large, then $Q(T+1)$ is bounded. To establish the negative drift, one resorts to~\eqref{eq:T3} again by choosing $\pi = \pi_0$. By the definition of $\pi_0$ (Slater's policy), the optimism $\bar{g}_t(x) \le g(x)$ and boundedness of $\bar{f}_t$, one can easily establish a negative drift, and hence the constraint violation.

\noindent\textbf{Comparison in constraint violation analysis.} Instead of using Hajek lemma, we directly utilize the convex optimization tool to obtain the constraint violation as in~\citep{efroni2020exploration,ding2021provably}, which is conceptually simpler. Moreover, the current constraint violation analysis in the Lyapunov-drift method also relies on the optimism of $\bar{g}_t$, which does not hold in general explorations beyond UCB.  Finally, when applying Hajek lemma to bound the virtual queue length, there exists a subtlety that makes the standard expected version of Hajek lemma fail due to the correlation of virtual queue length $Q(t)$ and $\bar{g}_t$. We give more details on this subtlety in Appendix~\ref{app:subtlety}.

\section{Simulation Results}
\label{sec:simulation}


In this section, we conduct simulations to compare the performance of our algorithms (i.e., \texttt{CKB-UCB}, \texttt{CKB-TS}, and \texttt{CKB-Rand}, that is, CKB with GP-UCB, GP-TS, and RandGP-UCB explorations, respectively) with existing safe KB algorithms based on both synthetic and real-world datasets. In particular, we consider the two most recent safe KB algorithms: \texttt{StageOpt} [28] (which has a superior performance compared to \texttt{SafeOpt} [27]) and \texttt{SGP-UCB} [3]. Our goal is to 
show that our proposed CKB algorithms can trade a slight performance in constraint violation for improvement in the reward regret with a reduced computation complexity and  flexible implementations, i.e., a better complexity-regret-constraint trade-off.

\subsection{Synthetic Data and Light-Tailed Real-World Data}

\textbf{Synthetic Data.} The domain $\mathcal{X}$ is generated by discretizing $[0,1]$ uniformly into $100$ points. The objective function $f(\cdot) = \sum_{i=1}^p a_i k(\cdot,x_i) $ is generated by
uniformly sampling $a_i \in [-1,1]$ and support points $x_i \in \mathcal{X}$ with  $p = 100$. With the same manner, we generate the constraint function $g$.  
The kernel is $k_{se}$ with parameter $l = 0.2$. Other parameters include $B$, $R$ and $\gamma_t$ are set similar as in the unconstrained case (e.g.,~\citep{chowdhury2017kernelized}). 

\noindent\textbf{Light-Tailed Real-World Data.} We use the light sensor data collected in the CMU Intelligent Workplace in Nov 2005, which is available online as Matlab structure\footnote{\url{http://www.cs.cmu.edu/~guestrin/Class/10708-F08/projects/}} 
and contains locations of 41 sensors, 601 train samples and 192 test samples. We use it in the
context of finding the maximum average reading of the sensors.  In particular, $f$ is set as empirical average of the test samples, with $B$ set as its maximum, and $k$ is set as the empirical covariance of the normalized train samples. The constraint is given by $g(\cdot) = -f(\cdot) + h$ with $h = B/2$.


We perform $50$ trials (each with $T = 10,000$) and plot the mean of the cumulative regret along with the  error bars, as shown in Fig.~\ref{fig:regret}.

\begin{figure*}[t]\centering
		\begin{subfigure}{0.32\textwidth}
			\includegraphics[width = 1.8in]{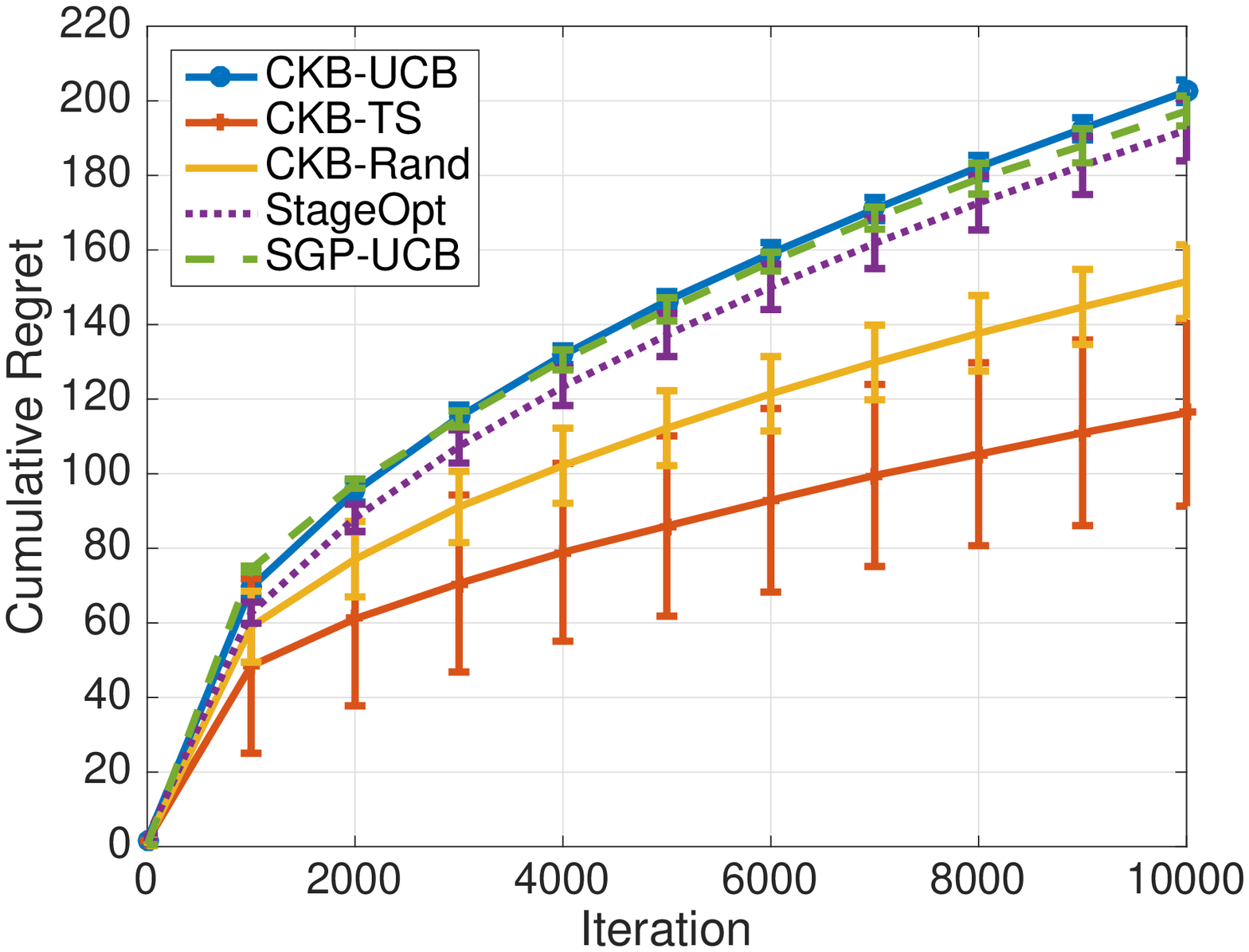}
			\caption{Regret on synthetic data }
		\end{subfigure}
		\begin{subfigure}{0.32\textwidth}
			\includegraphics[width = 1.8in]{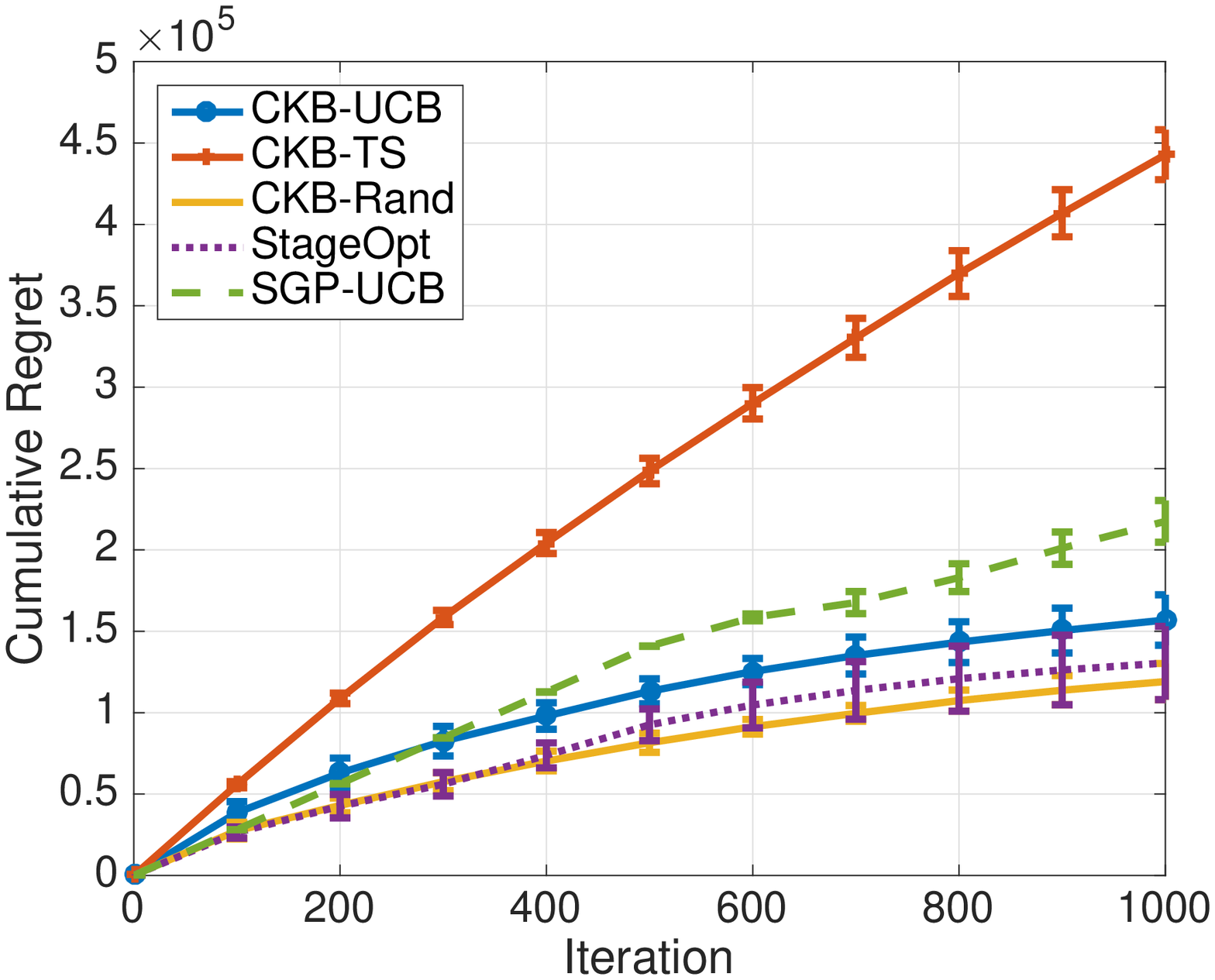}
			\caption{Regret on real-world data}
		\end{subfigure}
		\begin{subfigure}{0.32\textwidth}
			\includegraphics[width = 1.8in]{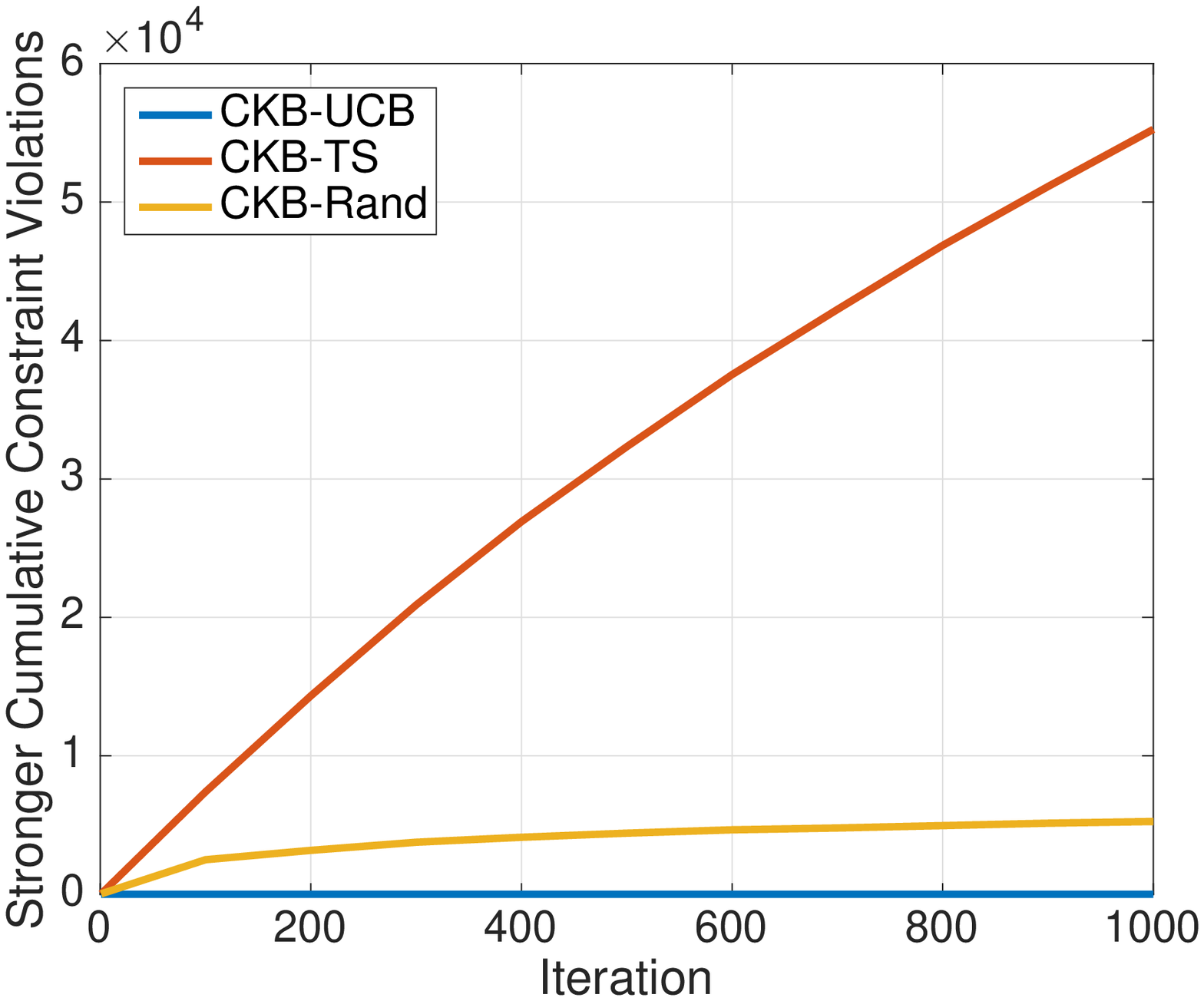}
			\caption{Constraint on real-world data}
		\end{subfigure}
		\caption{Experimental results on constrained kernelized bandits.}\label{fig:regret}
\end{figure*}

\noindent\textbf{Regret.} Our three CKB algorithms achieve a better (or similar) regret performance compared to the existing safe BO algorithms (see Figures~\ref{fig:regret}(a) and \ref{fig:regret}(b)). Among the three CKB algorithms, \texttt{CKB-Rand} appears to have reasonably good performance at all times.

\noindent\textbf{Constraint violation.} Since we have $\mathcal{V}(T) = 0$ under all the algorithms, we study the total number of rounds where the constraint is violated, denoted by $N$. In the synthetic data setting, our proposed CKB algorithms have $N \le 5$ over $T=10,000$ rounds; in the real-world data setting, \texttt{CKB-UCB} enjoys $N = 0$ and \texttt{CKB-Rand} has an average $N = 38$ over a horizon $T=1,000$. Furthermore, we plot the stronger cumulative constraint violations given by $\sum_{t=1}^T [g(x_t)]_+$ as shown in Figure~\ref{fig:regret}(c), from which we can see that all CKB algorithms achieve sublinear performance even with respect to this stronger metric.

\noindent\textbf{Practical considerations.} Our proposed CKB algorithms have the same computational complexity as the unconstrained case. In particular, they scale linearly with the number of actions in the discrete-domain case\footnote{For a continuous domain, as in the unconstrained case, one can resort to heuristic solvers (e.g., a combination of random sampling (cheap) and the "L-BFGS-B'" optimization method.). In fact, to attain the same order of regret bound, the solution to the acquisition maximization problem need not be exact. Instead, it only needs to maximize the acquisition function within $C/\sqrt{t}$ accuracy for some constant $C$ at each step. This will translate into an additional $C\sqrt{T}$ term in the regret bound.}. On the other hand, \texttt{StageOpt} scales quadratically due to the construction of the safe set, and \texttt{SGP-UCB} requires the additional random initialization stage, which leads to linear regret at the beginning of the learning process. Moreover, standard methods for improving the scalability of unconstrained KB can be naturally applied to our CKB algorithms. Finally, both \texttt{StageOpt} and \texttt{SGP-UCB} require the knowledge of a safe action (i.e., one that satisfies the constraint) in advance, and moreover, \texttt{StageOpt} requires $f$ to be Lipschitz and needs to estimate the Lipschitz constant, which impacts the robustness.  In contrast, CKB algorithms only require a mild Slater's condition as in Assumption~\ref{ass:slater_bandit}, which does not necessarily require the existence of a safe action. 


\subsection{Heavy-Tailed Real-World Data}
We further compare different constrained KB algorithms in a new real-world dataset, which demonstrates a heavy-tailed noise. Note that sub-Gaussian noise is required in all the existing theoretical works (including our work). We use this dataset to test the robustness of various constrained KB algorithms. The experimental results tend to show that our three CKB algorithms are more robust in terms of heavy-tailed noise, which is common in practical applications. 
The detail of this real-world dataset is deferred to Appendix~\ref{app:finance}.


\noindent\textbf{Regret.} We plot both cumulative regret and time-average regret in this setting (see Figures~\ref{fig:finace} (a) and (b)). We can observe that in the presence of heavy-tailed noise, our three CKB algorithms have significant performance gain over existing safe KB algorithms. 

\noindent\textbf{Constraint violation.} We focus on the strong metric, i.e., the number of rounds where the constraint is violated, denoted by $N$. We have that \texttt{CKB-Rand} enjoys an average $N = 21$ and \texttt{CKB-UCB} has an average $N = 47$ within the horizon of $T=10,000$. We also plot stronger cumulative constraint violations  given by $\sum_{t=1}^T [g(x_t)]_+$ as shown in Figure~\ref{fig:finace}(c), from which we can see that all CKB algorithms achieve sublinear performance even with respect to this stronger metric. 


\begin{figure*}[t]\centering
		\begin{subfigure}[b]{0.3\textwidth}
			\includegraphics[width = 1.8in]{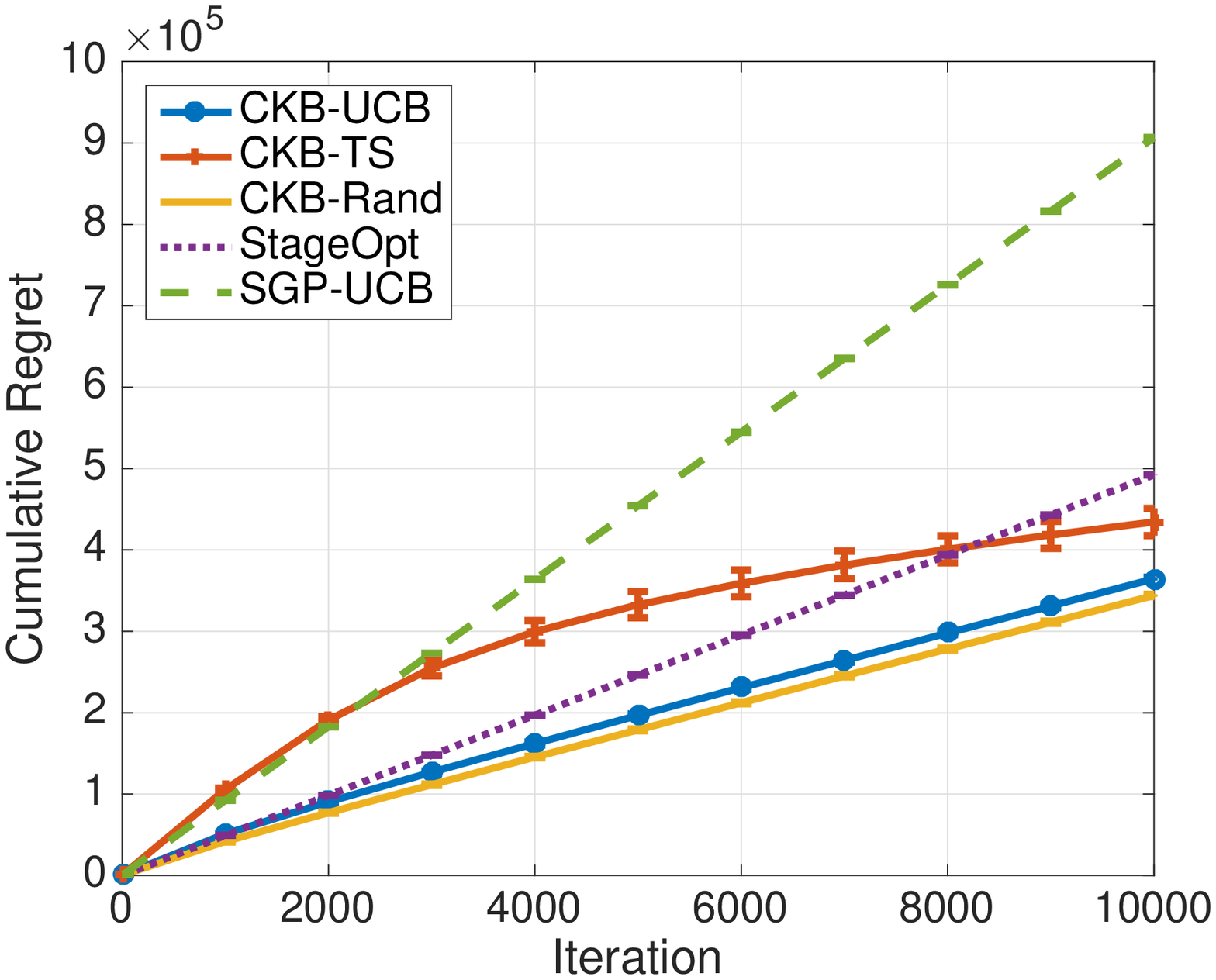}
			\caption{Regret on finance data }
		\end{subfigure}
		\quad
		\begin{subfigure}[b]{0.3\textwidth}
			\includegraphics[width = 1.8in]{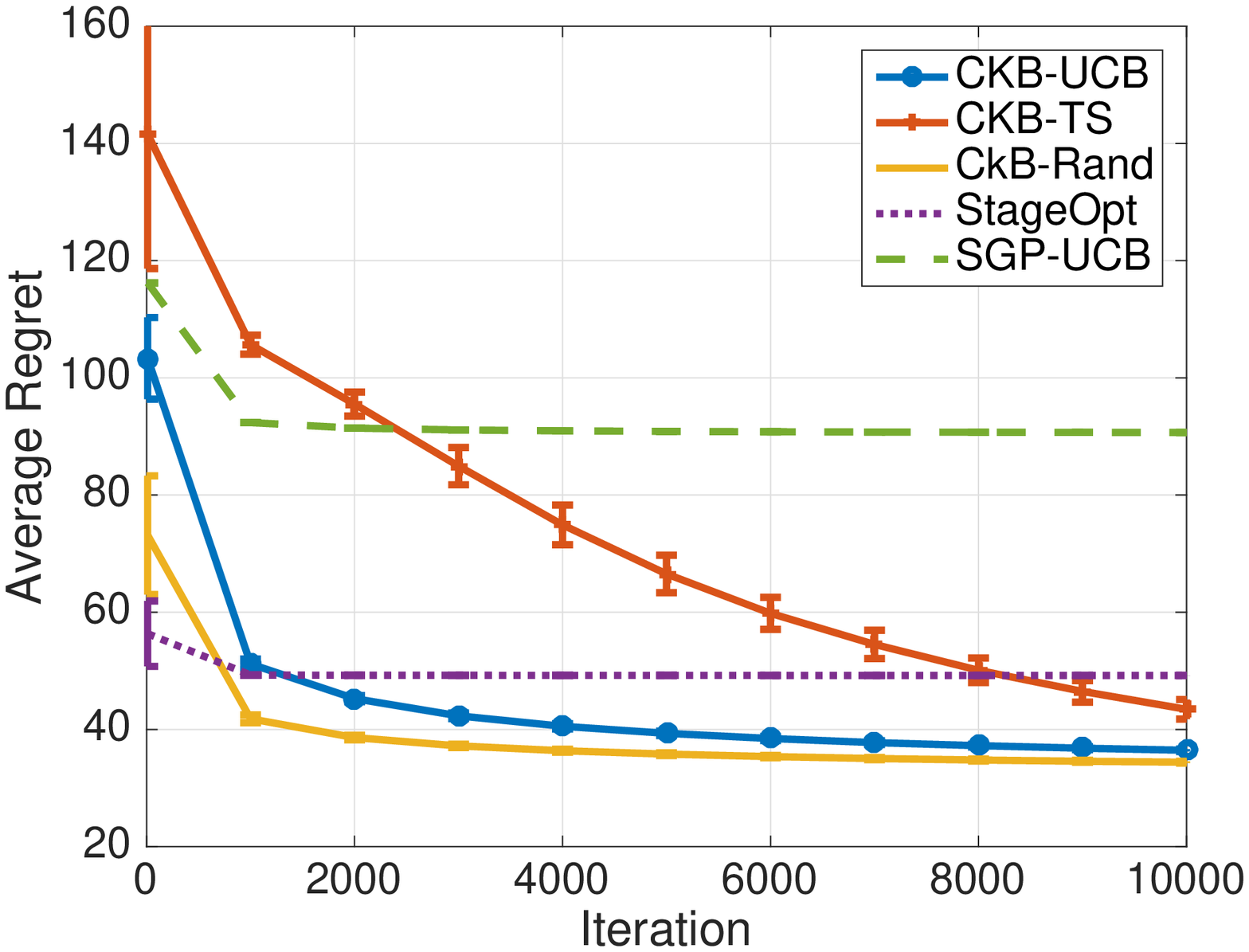}
			\caption{Avg. regret on finance data}
		\end{subfigure}
		\quad
		\begin{subfigure}[b]{0.3\textwidth}
			\includegraphics[width = 1.8in]{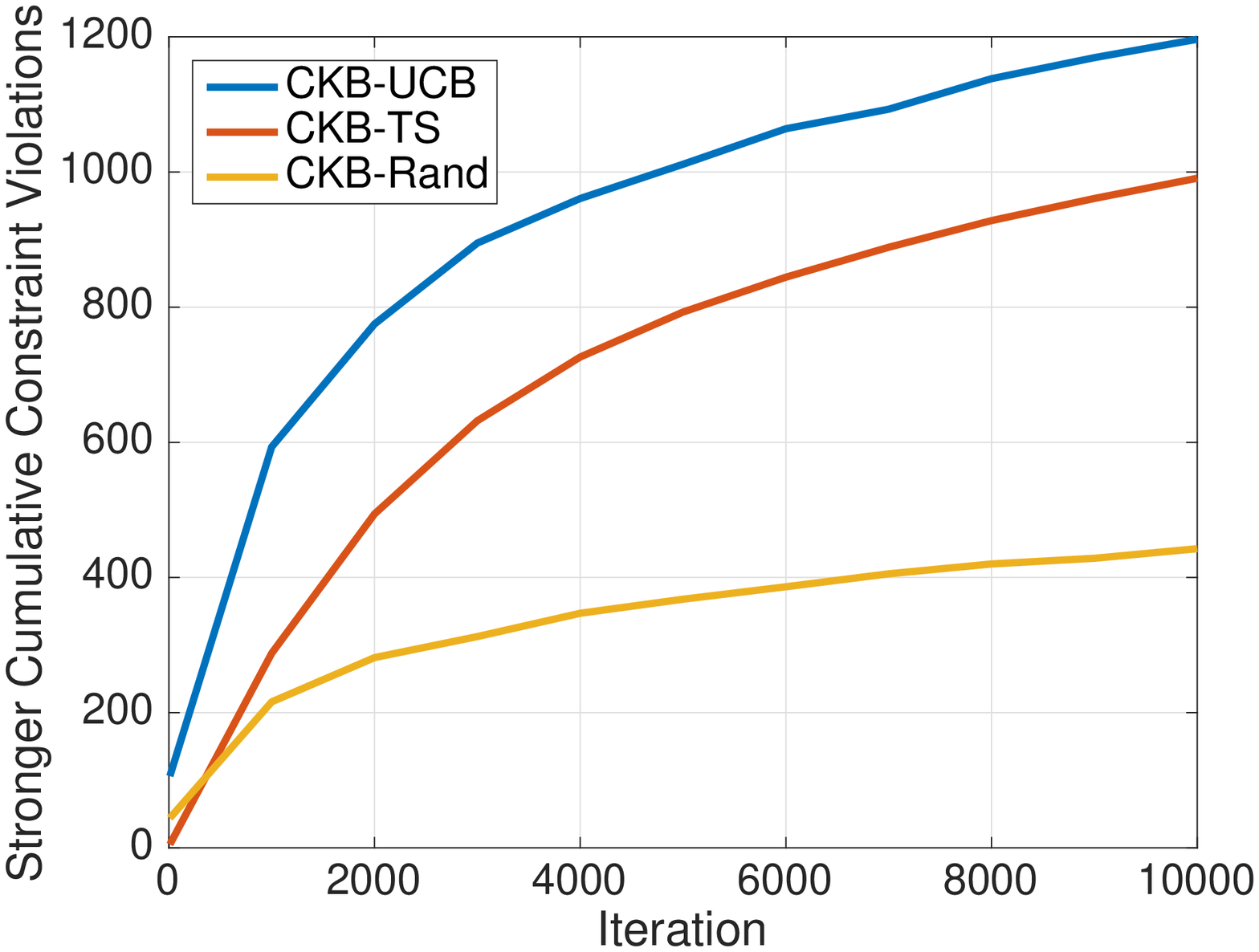}
			\caption{Constraint on finance data}
		\end{subfigure}
		 \vspace{-2mm}
		\caption{Experimental results on constrained kernelized bandits under heavy-tailed finance data.}\label{fig:finace}
		\vspace{-2mm}
\end{figure*}


\section{Proof of Theorem~\ref{thm:bandit_UCB}}

Before we present the proof, we first obtain the following lemma on the dual variable. 
\begin{lemma}
\label{lem:dual}
Under the update rule of $\phi_t$ in Algorithm~\ref{alg:CKB}, we have for any $\phi \in [0,\rho]$,
\begin{align*}
    \sum_{t=1}^{T}\bar{g}_t(x_t) (\phi -\phi_t) \le \frac{V}{2}(\phi_{1} - \phi)^2  + \sum_{t=1}^{T}\frac{1}{2V}\bar{g}_t(x_t)^2.
\end{align*}
\end{lemma}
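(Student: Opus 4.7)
The plan is to treat the dual update as a projected (sub)gradient step on the scalar variable $\phi$ with ``gradient'' $-\bar{g}_t(x_t)$ and step size $1/V$, and then run the standard one-step descent inequality followed by telescoping. The main observation is that the projection interval $[0,\rho]$ is convex and $\phi \in [0,\rho]$, so the projection onto $[0,\rho]$ is non-expansive with respect to any such $\phi$.

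First I would apply non-expansiveness of the Euclidean projection: since $\phi_{t+1} = \text{Proj}_{[0,\rho]}[\phi_t + \frac{1}{V}\bar{g}_t(x_t)]$ and $\phi \in [0,\rho]$,
\begin{equation*}
(\phi_{t+1} - \phi)^2 \le \Bigl(\phi_t + \tfrac{1}{V}\bar{g}_t(x_t) - \phi\Bigr)^2.
\end{equation*}
Expanding the right-hand side yields
\begin{equation*}
(\phi_{t+1} - \phi)^2 \le (\phi_t - \phi)^2 + \tfrac{2}{V}\bar{g}_t(x_t)(\phi_t - \phi) + \tfrac{1}{V^2}\bar{g}_t(x_t)^2.
\end{equation*}
Rearranging, I isolate the ``inner-product'' term to get
\begin{equation*}
\bar{g}_t(x_t)(\phi - \phi_t) \le \tfrac{V}{2}\bigl[(\phi_t - \phi)^2 - (\phi_{t+1} - \phi)^2\bigr] + \tfrac{1}{2V}\bar{g}_t(x_t)^2.
\end{equation*}

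Second, I sum this one-step inequality over $t = 1, \dots, T$. The first bracketed quantity telescopes, producing $\frac{V}{2}[(\phi_1 - \phi)^2 - (\phi_{T+1} - \phi)^2]$, and dropping the non-positive term $-\frac{V}{2}(\phi_{T+1} - \phi)^2$ upper-bounds it by $\frac{V}{2}(\phi_1 - \phi)^2$. Adding back the squared-gradient sum yields exactly the claim
\begin{equation*}
\sum_{t=1}^T \bar{g}_t(x_t)(\phi - \phi_t) \le \tfrac{V}{2}(\phi_1 - \phi)^2 + \sum_{t=1}^T \tfrac{1}{2V}\bar{g}_t(x_t)^2.
\end{equation*}

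There is essentially no obstacle here: the entire argument is the textbook projected-gradient regret bound for a one-dimensional online linear optimization, specialized to step size $1/V$ and ``loss'' $-\bar{g}_t(x_t)\phi$. The only point worth double-checking is that $\phi$ is chosen in the projection set so that non-expansiveness applies, which is given by the hypothesis $\phi \in [0,\rho]$. No properties of $\bar{g}_t$ beyond being a real number are needed (its boundedness by $G$ is used only downstream, when this lemma is combined with the bound $\bar{g}_t(x_t)^2 \le G^2$ to derive the $\frac{TG^2}{2V}$ term appearing in the regret decomposition of Theorem~\ref{thm:bandit_UCB}).
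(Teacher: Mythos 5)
Your proof is correct and follows essentially the same route as the paper's: non-expansiveness of the projection onto $[0,\rho]$, expansion of the square, rearrangement, and telescoping with the final term $-\tfrac{V}{2}(\phi_{T+1}-\phi)^2$ dropped. The only cosmetic difference is that the paper sums the expanded inequality first and then rearranges, whereas you rearrange per step before summing; the content is identical.
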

\begin{proof}
By the dual variable update rule in Algorithm~\ref{alg:CKB} and the non-expansiveness of projection to $[0,\rho]$, we have 
\begin{align*}
    (\phi_{t+1} - \phi)^2 &\le (\phi_t + \frac{1}{V}\bar{g}_t(x_t) - \phi)^2\\
    &=(\phi_t - \phi)^2 + \frac{2}{V}\bar{g}_t(x_t) (\phi_t - \phi) + \frac{1}{V^2} \bar{g}_t(x_t)^2.
\end{align*}

Summing over $T$ steps and multiplying both sides by $\frac{V}{2}$, we have 
\begin{align*}
  \frac{V}{2}(\phi_{T+1} - \phi)^2 - \frac{V}{2}(\phi_{1} - \phi)^2  \le \sum_{t=1}^{T}\bar{g}_t(x_t) (\phi_t - \phi) + \sum_{t=1}^{T}\frac{1}{2V}\bar{g}_t(x_t)^2.
\end{align*}

Hence, 
\begin{align}
    \sum_{t=1}^{T}\bar{g}_t(x_t) (\phi - \phi_t) \le \frac{V}{2}(\phi_{1} - \phi)^2  + \sum_{t=1}^{T}\frac{1}{2V}\bar{g}_t(x_t)^2,
\end{align}
which completes the proof.
\end{proof}

Now, we are ready to present the proof of Theorem~\ref{thm:bandit_UCB}.

\begin{proof}[Proof of Theorem~\ref{thm:bandit_UCB}]

Under Slater condition in Assumption~\ref{ass:slater_bandit}, we have the boundedness of the optimal dual solution by standard convex optimization analysis (cf.~\citep[Theorem 8.42]{beck2017first})
\begin{align*}
    0 \le \phi^* \le \frac{\left(\mathbb{E}_{\pi^*}\left[f(x)\right] - \mathbb{E}_{\pi_0}\left[f(x)\right]\right)}{\delta} \le \frac{2B}{\delta},
\end{align*}
where the last inequality holds by the boundedness of $f(x)$. Note that the reason why we can use convex analysis is that $\mathbb{E}_{\pi}\left[h(x)\right]$ for any fixed $h$ is a linear function with respect to $\pi$ (and is thus convex). Now, we  turn to establish a bound over $\mathcal{R}(T) + \phi\sum_{t=1}^{T}g(x_t)$. First, note that 
\begin{align}
     &\mathcal{R}(T) + \phi \sum_{t=1}^{T} g(x_t)\nonumber\\
    =& T \mathbb{E}_{\pi^*}\left[f(x)\right] - \sum_{t=1}^{T} f(x_t) + \phi\sum_{t=1}^{T}g(x_t)\nonumber\\
    =& T \mathbb{E}_{\pi^*}\left[f(x)\right] - \sum_{t=1}^{T}\bar{f}_t(x_t) + \sum_{t=1}^{T}\bar{f}_t(x_t) - f(x_t)+\phi\sum_{t=1}^{\tau}g(x_t)\label{eq:rt_plus_start}.
\end{align}
We can further bound~\eqref{eq:rt_plus_start} by using Lemma~\ref{lem:dual}. In particular, we have 
\begin{align}
  & T \mathbb{E}_{\pi^*}\left[f(x)\right] - \sum_{t=1}^{T}\bar{f}_t(x_t) + \sum_{t=1}^{T}\bar{f}_t(x_t) - f(x_t)+\phi\sum_{t=1}^{\tau}g(x_t)\nonumber\\
    \lep{a}& \sum_{t=1}^{T}\mathbb{E}_{\pi^*}\left[f(x)\right] - \phi_t \mathbb{E}_{\pi^*}\left[g(x)\right]  - \sum_{t=1}^{T}\bar{f}_t(x_t) + \sum_{t=1}^{T}\bar{f}_t(x_t) - f(x_t)+\phi\sum_{t=1}^{T}g(x_t)\nonumber\\
     \ep{b}& \sum_{t=1}^{T}\mathbb{E}_{\pi^*}\left[f(x)\right] - \phi_t \mathbb{E}_{\pi^*}\left[g(x)\right]  - \left(\sum_{t=1}^{T}\bar{f}_t(x_t) - \phi_t \bar{g}_t(x_t)\right) + \sum_{t=1}^{T}\bar{f}_t(x_t) - f(x_t)\nonumber \\
     &+ \phi \left(\sum_{t=1}^{T}g(x_t)\nonumber - \bar{g}_t(x_t)\right) +  \sum_{t=1}^{T}\bar{g}_t(x_t) (\phi - \phi_t)\nonumber\\
     \lep{c} & \sum_{t=1}^{T}\mathbb{E}_{\pi^*}\left[f(x)\right] - \phi_t \mathbb{E}_{\pi^*}\left[g(x)\right]  - \left(\sum_{t=1}^{T}\bar{f}_t(x_t) - \phi_t \bar{g}_t(x_t)\right) + \sum_{t=1}^{\tau}\bar{f}_t(x_t) - f(x_t)\nonumber\\
     &+\phi \left(\sum_{t=1}^{T}g(x_t)\nonumber - \bar{g}_t(x_t)\right) + \frac{V}{2}(\phi_{1} - \phi)^2  + \sum_{t=1}^{T}\frac{1}{2V}\bar{g}_t(x_t)^2\nonumber\\
     \ep{d}& \mathcal{T}_1 + \mathcal{T}_2 + \frac{V}{2}\phi^2 + \frac{1}{2V}TG^2,\label{eq:rt_plus_end}
\end{align}
where (a) holds since $\phi_t \ge 0$ and $\mathbb{E}_{\pi^*}\left[g(x)\right] \le 0$; (b) holds by adding and subtracting terms; (c) follow from Lemma~\ref{lem:dual} to bound the last term; (d) holds by the fact $\phi_1 = 0$, the boundedness of $\bar{g}_t$ and the definitions of $\mathcal{T}_1$ and $\mathcal{T}_2$, i.e.,
\begin{align}
    &\mathcal{T}_1 = \sum_{t=1}^{T} (\mathbb{E}_{\pi^*}\left[f(x)\right]- \phi_t \mathbb{E}_{\pi^*}\left[g(x)\right])  - \sum_{t=1}^{T} (\bar{f}_t(x_t) - \phi_t \bar{g}_t(x_t))\label{eq:t1_def},\\
    &\mathcal{T}_2 = \sum_{t=1}^{T} (\bar{f}_t(x_t) - f(x_t))
     +\phi \sum_{t=1}^{T} (g(x_t) - \bar{g}_t(x_t))\label{eq:t2_def}.
\end{align}

Plugging~\eqref{eq:rt_plus_end} into~\eqref{eq:rt_plus_start}, yields for any $\phi \in [0,\rho]$,
\begin{align}
\label{eq:rt_plus}
    \mathcal{R}(T) + \phi \sum_{t=1}^{T} g(x_t) \le \mathcal{T}_1 + \mathcal{T}_2 + \frac{V}{2}\phi^2 + \frac{1}{2V}TG^2.
\end{align}
First, assume that we already have a bound on $\mathcal{T}_1 + \mathcal{T}_2$ , i.e., $\mathcal{T}_1 + \mathcal{T}_2 \le \chi(T,\phi)$ with high probability, and $\chi(T,\phi)$ is an increasing function in $\phi$. This directly leads to the following inequality (with $V = G\sqrt{T}/\rho$) for any $\phi \in [0,\rho]$: 
\begin{align}
\label{eq:decomp_proof}
    \mathcal{R}_+(T) + \phi\sum_{t=1}^T g(x_t) \le \chi(T,\phi) + \frac{\phi^2 G\sqrt{T}}{2\rho} + \frac{\rho G\sqrt{T}}{2}.
\end{align}
Based on this key inequality, we can analyze both regret and constraint violation. 

\textbf{Regret}. We can simply choose $\phi = 0$ in~\eqref{eq:decomp_proof}, and obtain that with high probability 
\begin{align}
\label{eq:regret_final}
    \mathcal{R}_+(T) = O\left(\chi(T,0) + \rho G \sqrt{T}\right).
\end{align}

\textbf{Constraint violation}. To obtain the bound on $\mathcal{V}(T)$, inspired by~\citep{efroni2020exploration}, we will resort to tools from constrained convex optimization. First, we have $\frac{1}{T}\sum_{t=1}^T f(x_t) =   \mathbb{E}_{\pi'}\left[f(x)\right]$ and $\frac{1}{T}\sum_{t=1}^T g(x_t) = \mathbb{E}_{\pi'}\left[g(x)\right]$ for some probability measure $\pi'$ by the convexity of probability measure. As a result, we have 
\begin{align}
\label{eq:cons_condition}
   \mathbb{E}_{\pi^*}\left[f(x)\right] - \mathbb{E}_{\pi'}\left[f(x)\right] + \rho \left[\mathbb{E}_{\pi'}\left[g(x)\right]\right]_+  = \frac{1}{T} \mathcal{R}_+(T) + \frac{1}{T}\phi\sum_{t=1}^T g(x_t) \le \frac{\chi(T,\rho) + \rho G\sqrt{T}}{T}, 
\end{align}
where $\left[a\right]_+ : =\max\{0,a\}$, and the first equality holds by choosing $\phi = \rho$ if $\sum_{t=1}^Tg(x_t) \ge 0$, and otherwise $\phi=0$, and the second inequality holds by upper bounding RHS of~\eqref{eq:decomp} with $\phi = \rho$ since~\eqref{eq:decomp_proof} holds for all $\phi \in [0,\rho]$ and $\chi(T,\phi)$ is increasing in $\phi$. 

Then, we will apply the following useful lemma, which is adapted from Theorem 3.60 in~\citep{beck2017first}.
\begin{lemma}
\label{lem:beck}
Consider the following convex constrained problem $h(\pi^*) = \max_{\pi \in \mathcal{C}}\{h(\pi): w(\pi) \le 0\}, $where both $h$ and $w$ are convex over the convex set $\mathcal{C}$ in a vector space. Suppose $h(\pi^*)$ is finite and there exists a slater point $\pi_0$ such that $w(\pi_0) \le -\delta$, and a constant $\rho \ge 2 \kappa^*$, where $\kappa^*$ is the optimal dual variable, i.e., $\kappa^* = \argmin_{\lambda \ge 0}( \max_{\pi} h(\pi) - \kappa w(\pi) )$. Assume that $\pi' \in \mathcal{C}$ satisfies 
\begin{align}
\label{eq:cons_lem}
    h(\pi^*) - h(\pi') + \rho \left[w(\pi')\right]_+ \le \varepsilon,
\end{align}
for some $\varepsilon > 0$, then we have $[w(\pi')]_+ \le 2\varepsilon/\rho$.
\end{lemma}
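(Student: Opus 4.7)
The plan is to combine Lagrangian strong duality with the hypothesized inequality~\eqref{eq:cons_lem} and a case split on the sign of $w(\pi')$. Define the Lagrangian $L(\pi,\kappa) := h(\pi) - \kappa w(\pi)$ for $\kappa \ge 0$. Under Slater's condition together with the convexity assumption (in particular, $h$ enters linearly in the paper's application, so the problem is effectively a linear program over a convex set of probability measures), strong duality holds, and the optimal dual variable $\kappa^*$ achieves a saddle point: $h(\pi^*) = \max_{\pi \in \mathcal{C}} L(\pi,\kappa^*)$. The boundedness of $\kappa^*$ under Slater's condition is standard and is exactly what allows us to pick the finite $\rho \ge 2\kappa^*$ in the hypothesis.

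From the saddle-point identity, evaluating at the candidate $\pi' \in \mathcal{C}$ (which need not be feasible) yields the single key inequality
\begin{equation*}
    h(\pi^*) \ge L(\pi',\kappa^*) = h(\pi') - \kappa^* w(\pi'),
\end{equation*}
equivalently $h(\pi^*) - h(\pi') \ge -\kappa^* w(\pi')$. Substituting this lower bound into the assumption $h(\pi^*) - h(\pi') + \rho [w(\pi')]_+ \le \varepsilon$ gives $\rho [w(\pi')]_+ - \kappa^* w(\pi') \le \varepsilon$.

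Now split on cases. If $w(\pi') \le 0$, then $[w(\pi')]_+ = 0$ and the conclusion $[w(\pi')]_+ \le 2\varepsilon/\rho$ is immediate (in fact it equals zero). If $w(\pi') > 0$, then $[w(\pi')]_+ = w(\pi')$, so the displayed inequality simplifies to $(\rho - \kappa^*) w(\pi') \le \varepsilon$. Invoking the standing assumption $\rho \ge 2\kappa^*$ gives $\rho - \kappa^* \ge \rho/2$, and hence $w(\pi') \le 2\varepsilon/\rho$, which is exactly the desired bound on $[w(\pi')]_+$.

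The only nontrivial step is the strong duality invocation, which is the part I would flag as the main technical obstacle in a fully rigorous write-up. In the abstract vector-space setting of the lemma statement this needs a suitable form of the Sion minimax theorem or a direct Slater-based argument (e.g.\ along the lines of Theorems 8.42 and 3.60 in \citep{beck2017first}); in the paper's concrete use, $\pi$ ranges over probability distributions on $\mathcal{X}$ and both $h$ and $w$ are linear in $\pi$, so strong duality and the bound $\kappa^* \le 2B/\delta$ follow from standard Lagrangian arguments already cited in the proof of Theorem~\ref{thm:bandit_UCB}. Once strong duality is in hand, the remainder is just the two-line algebraic manipulation above.
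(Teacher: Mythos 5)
Your proof is correct. Note that the paper does not actually prove this lemma; it is cited verbatim as an adaptation of Theorem~3.60 in \citep{beck2017first}, and your argument is precisely the standard saddle-point proof of that result: strong duality under Slater's condition gives $h(\pi^*) = \max_{\pi\in\mathcal{C}} L(\pi,\kappa^*) \ge h(\pi') - \kappa^* w(\pi')$, after which the case split and the bound $\rho - \kappa^* \ge \rho/2$ yield the claim. You are also right to flag strong duality as the only substantive step; one small point worth making explicit is that the lemma as stated says $h$ is \emph{convex}, which is the wrong hypothesis for a maximization problem (one needs $h$ concave for the saddle point to exist) --- this is a defect of the statement rather than of your proof, and is harmless in the paper's application where $h(\pi) = \mathbb{E}_{\pi}[f(x)]$ and $w(\pi) = \mathbb{E}_{\pi}[g(x)]$ are linear in $\pi$.
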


Thus, since~\eqref{eq:cons_condition} satisfies~\eqref{eq:cons_lem} and $\mathbb{E}_{\pi}\left[h(x)\right]$ for any fixed $h$ is a linear function with respect to $\pi$, by Lemma~\ref{lem:beck}, we have 
\begin{align}
\label{eq:cons_vio_fina}
\mathcal{V}(T) = O\left(\frac{1}{\rho}\chi(T,\rho) + G\sqrt{T}\right).
\end{align}

We are only left to bound $\mathcal{T}_1 + \mathcal{T}_2$ by $\chi(T,\phi)$. To this end, we will resort to standard concentration results for GP bandits. First, by~\citep[Theorem 2]{chowdhury2017kernelized}, we have the following lemma. 
\begin{lemma}
\label{lem:gp_concen}
Fix $\alpha \in (0,1]$, with probability at least $1-\alpha$, the followings hold simultaneously for all $t \in [T]$ and all $x \in \mathcal{X}$
\begin{align*}
    &|f(x) - \mu_{t-1}(x)| \le \beta_t \sigma_{t-1}(x), \quad |g(x) - \widetilde{\mu}_{t-1}(x)| \le \widetilde{\beta}_t \widetilde{\sigma}_{t-1}(x),
\end{align*}
\end{lemma}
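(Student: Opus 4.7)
The plan is to invoke the RKHS self-normalized concentration bound of~\citep{chowdhury2017kernelized} (their Theorem 2) separately for the reward function $f$ and the constraint function $g$, and then combine the two resulting high-probability guarantees by a union bound over the two bad events. All of the analytic work is already encapsulated in that cited theorem; the present lemma is essentially a bookkeeping exercise that turns a per-function guarantee into a joint one and calibrates the failure probability.

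First I would apply the cited theorem to $f$. By Assumption~\ref{ass:boundf}, $\|f\|_{\mathcal{H}_k} \le B$, $k(x,x) \le 1$ for all $x$, and $\{\eta_t\}$ is i.i.d.\ $R$-sub-Gaussian---exactly the hypotheses under which the theorem applies to the GP posterior $(\mu_{t-1},\sigma_{t-1})$ defined in \eqref{eq:mu}--\eqref{eq:sigma}. Using the theorem with failure probability $\alpha/2$ yields that, with probability at least $1-\alpha/2$, uniformly in $t \ge 1$ and $x \in \mathcal{X}$, $|f(x)-\mu_{t-1}(x)| \le \beta_t \sigma_{t-1}(x)$, where $\beta_t = B + R\sqrt{2(\gamma_{t-1}+1+\ln(2/\alpha))}$ matches the choice made in Theorem~\ref{thm:bandit_UCB}. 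Crucially, the uniformity over $x$ comes for free from the reproducing property used inside the cited argument, so no additional union bound over the (possibly continuous) action space is needed.

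Next I would repeat the same invocation for $g$, using the parallel hypotheses $\|g\|_{\mathcal{H}_{\widetilde{k}}} \le G$, $\widetilde{k}(x,x) \le 1$, and the $\widetilde{R}$-sub-Gaussian i.i.d.\ noise $\{\xi_t\}$ supplied by Assumption~\ref{ass:boundf}, again with failure probability $\alpha/2$. This produces the analogous uniform-in-$(t,x)$ bound $|g(x)-\widetilde{\mu}_{t-1}(x)| \le \widetilde{\beta}_t \widetilde{\sigma}_{t-1}(x)$ with $\widetilde{\beta}_t$ as stated. A union bound over the two bad events then delivers both inequalities simultaneously with probability at least $1-\alpha$, which is exactly the conclusion of the lemma.

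There is no genuine obstacle here; the only detail worth checking is that the constants are consistent. The paper's definitions of $\beta_t$ and $\widetilde{\beta}_t$ use $\ln(2/\alpha)$ rather than $\ln(1/\alpha)$, which is precisely what is needed to absorb the factor-of-$2$ inflation in the failure probability coming from splitting $\alpha$ equally between the $f$-event and the $g$-event in the final union bound.
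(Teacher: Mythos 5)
Your proposal is correct and matches the paper's own (very terse) argument: the paper simply invokes Theorem 2 of \citep{chowdhury2017kernelized} once for $f$ and once for $g$ and combines them, with the $\ln(2/\alpha)$ in the definitions of $\beta_t$ and $\widetilde{\beta}_t$ accounting for the equal split of the failure probability exactly as you describe. No further comment is needed.
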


Thus, based on this lemma and the definition of GP-UCB exploration, we have with high probability, $f_t(x) \ge f(x)$ and $g_t(x) \le g(x)$ for all $t \in [T]$ and $x \in \mathcal{X}$. This directly implies that $\bar{f}_t(x) \ge f(x)$ and $\bar{g}_t(x) \le g(x)$ for all $t \in [T]$ and $x \in \mathcal{X}$ (i.e., optimistic estimates), which holds by $|f(x)| \le B$ and $|g(x)|\le G$ and the way of truncation in Algorithm~\ref{alg:CKB}. Now, to bound $\mathcal{T}_1$ in~\eqref{eq:t1_def}, we have
\begin{align*}
    \mathcal{T}_1 &= \sum_{t=1}^{T} (\mathbb{E}_{\pi^*}\left[f(x)\right]- \mathbb{E}_{\pi^*}\left[\bar{f}_t(x)\right] + \mathbb{E}_{\pi^*}\left[\bar{f}_t(x)\right] - \bar{f}_t(x_t) ) \\
    &+ \phi_t \sum_{t=1}^{T} (\bar{g}_t(x_t)- \mathbb{E}_{\pi^*}\left[\bar{g}_t(x)\right] + \mathbb{E}_{\pi^*}\left[\bar{g}_t(x)\right] -  \mathbb{E}_{\pi^*}\left[{g}(x)\right] )\\
    &\lep{a} \sum_{t=1}^{T} \left(\mathbb{E}_{\pi^*}\left[\bar{f}_t(x)\right] - \bar{f}_t(x_t)\right) + \phi_t \sum_{t=1}^{T} \left(\bar{g}_t(x_t)- \mathbb{E}_{\pi^*}\left[\bar{g}_t(x)\right]\right)\\
    &= \sum_{t=1}^T \left( \mathbb{E}_{\pi^*}\left[\bar{f}_t(x)\right] - \phi_t \mathbb{E}_{\pi^*}\left[\bar{g}_t(x)\right] - (\bar{f}_t(x_t) -\phi_t \bar{g}_t(x_t)) \right)\\
    &\lep{b} 0,
\end{align*}
where (a) holds by the fact that estimates are optimistic, i.e., $\bar{f}_t(x) \ge f(x)$ and $\bar{g}_t(x) \le g(x)$ for all $t \in [T]$ and $x \in \mathcal{X}$; (b) holds by the greedy selection of Algorithm~\ref{alg:CKB}.

Now, we turn to bound $\mathcal{T}_2$. In particular, we have 
\begin{align}
    \mathcal{T}_2 &\lep{a} \sum_{t=1}^T 2\beta_t\sigma_{t-1}(x_t) + \phi \sum_{t=1}^T2\widetilde{\beta}_t \widetilde{\sigma}_{t-1}(x_t)\nonumber\\
    &\lep{b} O\left(\beta_T \sqrt{T\gamma_T} + \phi \widetilde{\beta}_T \sqrt{T\widetilde{\gamma}_T}\right),\label{eq:sum_variance}
\end{align}
where (a) holds by Lemma~\ref{lem:gp_concen} and the definition of GP-UCB exploration, i.e., $f_t(x) = \mu_{t-1}(x) + \beta_t \sigma_{t-1}(x)$ and $g_t(x) = \widetilde{\mu}_{t-1}(x) - \widetilde{\beta}_t \widetilde{\sigma}_{t-1}(x)$. Note that truncation also does not affect this step; (b) holds by Cauchy-Schwartz inequality and the bound of sum of predictive variance (cf.~\citep[Lemma 4]{chowdhury2017kernelized}). Note that we have also used the fact that $\beta_t$ and $\widetilde{\beta}_t$ is increasing in $t$. 

Putting the bounds on $\mathcal{T}_1$ and $\mathcal{T}_2$ together, we have obtained that with high probability 
\begin{align*}
    \mathcal{T}_1 + \mathcal{T}_2 \le \chi(T,\phi) := O\left(\beta_T \sqrt{T\gamma_T} + \phi \widetilde{\beta}_T \sqrt{T\widetilde{\gamma}_T}\right).
\end{align*}
Finally, plugging $\chi(T,0)$ into~\eqref{eq:regret_final}, yield the regret bound as follows (note that $\beta_t  = B + R\sqrt{2(\gamma_{t-1} + 1 + \ln(2/\alpha))}$)
\begin{align*}
    \mathcal{R}_+(T) = O\left(B\sqrt{T\gamma_T} + \sqrt{T\gamma_T(\gamma_T + \ln (2/\alpha))} + \rho G \sqrt{T}\right),
\end{align*}
and plugging $\chi(T,\rho)$ into~\eqref{eq:cons_vio_fina}, yields the bound on constraint violation as 
\begin{align*}
    \mathcal{V}(T) = O\left(\left(1+\frac{1}{\rho}\right)\left( C\sqrt{T\hat{\gamma}_T} + \sqrt{T\hat{\gamma}_T (\hat{\gamma}_T + \ln(2/\alpha))}\right) + G\sqrt{T} \right),
\end{align*}
where $C := \max\{B,G\}$ and $\hat{\gamma}_T := \max\{\gamma_T,\widetilde{\gamma}_T\}$. Hence, it completes the proof.
\end{proof}

\section{Proof of Theorem~\ref{thm:rand_exp}}
Before we present the proof, we introduce a new notation to make the presentation easier. In particular, we let $h(\pi):= \mathbb{E}_{\pi}\left[h(x)\right]$ for any function $h$ and $\pi_t$ is a dirac delta function at the point $x_t$.
\begin{proof}[Proof of Theorem~\ref{thm:rand_exp}]
As shown in the proof of Theorem~\ref{thm:bandit_UCB}, all we need to do is to find a high probability bound over $\mathcal{T}_1 + \mathcal{T}_2$ under the sufficient condition in Assumption~\ref{ass:sufficient}. Under our newly introduced notation, we have 
\begin{align}
\label{eq:t1t2_proof}
    \mathcal{T}_1 + \mathcal{T}_2 =  &\sum_{t=1}^T\left(z_{\phi_t}(\pi^*) - \hat{z}_{\phi_t}(\pi_t) + \hat{z}_{\phi}(\pi_t)- z_{\phi}(\pi_t)\right) := \sum_{t=1}^T d_t,
\end{align}
where $z_{\phi_t}(\cdot):=f(\cdot) - \phi_t g(\cdot)$ and $\hat{z}_{\phi_t}(\cdot):=\bar{f}_t(\cdot) - \phi_t \bar{g}_t(\cdot)$, and similar definitions for $z_{\phi}$ and $\hat{z}_{\phi}$. 

Let $\Delta_{\phi_t}(\pi):= z_{\phi_t}(\pi^*) - z_{\phi_t}(\pi) = (f(\pi^*) - \phi_t g(\pi^*)) - (f(\pi)-\phi_t g(\pi))$. Then,  we define the `undersampled' set as 
\begin{align*}
    \bar{S}_{t}:=\{\pi \in \Pi: \alpha_{\phi_t}(\pi):=c_{f,t} \sigma_{t-1}(\pi) + \phi_tc_{g,t}\widetilde{\sigma}_{t-1}(\pi) \ge \Delta_{\phi_t}(\pi)\},
\end{align*}
 where $c_{f,t} = (c_{f,t}^{(1)} + c_{f,t}^{(2)})$ and $c_{g,t} = (c_{g,t}^{(1)} + c_{g,t}^{(2)})$ (similarly $\alpha_{\phi}(\pi):=c_{f,t} \sigma_{t-1}(\pi) + \phi c_{g,t}\widetilde{\sigma}_{t-1}(\pi)$). Let $u_t = \argmin_{\pi\in \bar{S}_t} \alpha_{\phi_t}(\pi)$. Thus, conditioned on $E^{est}$ and $E_t^{conc}$,  we have 
 \begin{align}
    d_t &= z_{\phi_t}(\pi^*) - \hat{z}_{\phi_t}(\pi_t) + \hat{z}_{\phi}(\pi_t)- z_{\phi}(\pi_t)\nonumber\\
    &=z_{\phi_t}(\pi^*) - z_{\phi_t}(u_t) + z_{\phi_t}(u_t) - \hat{z}_{\phi_t}(\pi_t) + \hat{z}_{\phi}(\pi_t)- z_{\phi}(\pi_t) \nonumber\\
    &= \Delta_{\phi_t}(u_t) + z_{\phi_t}(u_t) - \hat{z}_{\phi_t}(\pi_t) + \hat{z}_{\phi}(\pi_t)- z_{\phi}(\pi_t)\nonumber\\
    &\lep{a} \Delta_{\phi_t}(u_t) +
    \hat{z}_{\phi_t}(u_t) - \hat{z}_{\phi_t}(\pi_t) +
    \alpha_{\phi_t}(u_t)  + \alpha_{\phi}(\pi_t)\nonumber\\
    &\lep{b} \Delta_{\phi_t}(u_t) +  \alpha_{\phi_t}(u_t) + \alpha_{\phi}(\pi_t)\nonumber\\
    &\lep{c} 2\alpha_{\phi_t}(u_t) + \alpha_{\phi}(\pi_t),\label{eq:dt}
\end{align}
where (a) holds since under event $E^{est} \cap E_t^{conc}$, for all $x$, $|f(x) - f_t(x)| \le (c_{f,t}^{(1)} + c_{f,t}^{(2)})\sigma_{t-1}(x)$ and $|g(x) - g_t(x)| \le (c_{g,t}^{(1)} + c_{g,t}^{(2)})\widetilde{\sigma}_{t-1}(x)$ and the facts that $|g(x) - \bar{g}_t(x)| \le |g(x) - {g}_t(x)|$ since $|g(x)| \le G$ and $|{f}(x)-\bar{f}_t(x)| \le |f(x) - {f}_t(x)|$ since $|f(x)| \le B$; (b) holds by the greedy selection in Algorithm~\ref{alg:CKB}; (c) follows from $u_t \in \bar{S}_t$. Thus, conditioned on $E^{est}$, we have
\begin{align*}
    \ext{d_t} &= \ext{d_t I\{E_t^{conc}\}} + \ext{d_t I\{\bar{E}_t^{conc}\}}\\
    &\lep{a}  \ext{r_t I\{E_t^{conc}\}} + (4B+4\rho G)p_{2,t}\\
    &\lep{b} \ext{\alpha_{\phi}(\pi_t)} + 2\alpha_{\phi_t}(u_t) + (4B+4\rho G)p_{2,t}\\
    &\lep{c} \ext{\alpha_{\phi}(\pi_t)} + 2\frac{\ext{\alpha_{\phi_t}(\pi_t)}}{\pt{x_t \in \bar{S}_t}} + (4B+4\rho G)p_{2,t}\\
    &\ep{d} \left(1+\frac{2}{\pt{\pi_t \in \bar{S}_t}}\right)\ext{\alpha_{\rho}(\pi_t)} + (4B+4\rho G)p_{2,t},
\end{align*}
where (a) holds by definition of $p_{2,t}$, the fact that $\phi, \phi_t \le \rho$ and the boundedness of functions; (b) follows from Eq.~\eqref{eq:dt} and the fact that given $\mathcal{F}_{t-1}$, $\alpha_{\phi_t}(u_t)$ is deterministic; (c) holds by the following argument: $\ext{\alpha_{\phi_t}(\pi_t)} \ge \ext{\alpha_{\phi_t}(\pi_t) | \pi_t \in \bar{S}_t} \pt{\pi_t \in \bar{S}_t} \ge  {\alpha_{\phi_t}(u_t)} \pt{\pi_t \in \bar{S}_t} $, which holds by the definition of $u_t$ and the fact that $\alpha_{\phi_t}(u_t)$ and $S_t$ are both $\mathcal{F}_{t-1}$-measurable; (d) holds by definition $\alpha_{\rho}(\pi_t):= c_{f,t} \sigma_{t-1}(\pi_t) + \rho c_{g,t}\widetilde{\sigma}_{t-1}(\pi_t)$ and the fact that both $\phi, \phi_t$ are bounded by $\rho$. Hence, the key is to find a lower bound on the probability $\pt{\pi_t\in \bar{S}_t}$. In particular, conditioned on $E^{est}$, we have 
\begin{align*}
    &\pt{\pi_t \in \bar{S}_t} \\
    &\gep{a} \pt{\hat{z}_{\phi_t}(\pi^*) \ge \max_{\pi_j \in {S}_t}\hat{z}_{\phi_t}(\pi_j), E_t^{conc} }\\
    &\gep{b} \pt{\hat{z}_{\phi_t}(\pi^*) \ge z_{\phi_t}(\pi^*), E_t^{conc}  }\\
    &\ge \pt{\hat{z}_{\phi_t}(\pi^*) \ge z_{\phi_t}(\pi^*)} - \pt{\bar{E}_t^{conc}}\\
    &\ge \pt{\bar{f}_t(\pi^*) \ge f(\pi^*), \bar{g}_t(\pi^*) \le g(\pi^*) } - \pt{\bar{E}_t^{conc}}\\
    &\ep{c} \pt{f_t(\pi^*) \ge f(\pi^*), {g}_t(\pi^*) \le g(\pi^*) } - \pt{\bar{E}_t^{conc}}\\
    &\gep{d} \pt{f_t(\pi^*) \ge \mu_{t-1}(\pi^*) + c_{f,t}^{(1)}\sigma_{t-1}(\pi^*), g_t(\pi^*) \le \widetilde{\mu}_{t-1}(\pi^*) - c_{g,t}^{(1)}\widetilde{\sigma}_{t-1}(\pi^*)  } - \pt{\bar{E}_t^{conc}}\\
    &=\pt{E_{t}^{anti}} - \pt{\bar{E}_t^{conc}}\\
    &=  p_3-p_{2,t},
\end{align*}
where (a) holds by the greedy selection in Algorithm~\ref{alg:CKB} and $\pi^* \in \bar{S}_t$ since $\Delta_{\phi_t}(\pi^*) = 0$. Note that $S_t$ is the complement of the `undersampled' set $\bar{S}_t$;  (b) holds given $E^{est} \cap E_t^{conc}$, for all $\pi_j \in S_t$ $\hat{z}_{\phi_t}(\pi_j) \le z_{\phi_t}(\pi_j) + \alpha_{\phi_t}(\pi_j) \le z_{\phi_t}(\pi_j) + \Delta_{\phi_t}(\pi_j) = z_{\phi_t}(\pi^*)$; (c) holds since $|g(x)| \le G$ for all $x$ and $|f(x)| \le B$ for all $x$; (d) holds since under $E^{est}$, we have $f(x) \le \mu_{t-1}(x^*) + c_{1,f}\sigma_{t-1}(x^*)$ and $g(x) \ge \widetilde{\mu}_{t-1}(x^*) - c_{1,g}\widetilde{\sigma}_{t-1}(x^*) $ for all $x$.

Putting everything together, we have now arrived at that conditioned on $E^{est}$, 
\begin{align}
    \ext{d_t} &\le \ext{\alpha_{\rho}(x_t)}\left(1+ \frac{2}{p_3-p_{2,t}}\right) + (4B+4\rho G)p_{2,t}\nonumber\\
    &\le  \frac{1}{p_4}\ext{\alpha_{\rho}(x_t)} + (4B+4\rho G)p_{2,t}\label{eq:bar_dt}.
\end{align}
where the last inequality follows from the boundedness condition in the sufficient condition. In order to obtain a high probability bound, inspired by~\citep{chowdhury2017kernelized}, we will resort to martingale techniques. Let us define the following terms
\begin{definition}
Define $Y_0 = 0$, and for all $t=1,\ldots, T$,
\begin{align*}
    &\bar{d}_t = d_t \mathcal{I}\{E^{est}\}\\
    &X_t = \bar{d}_t - \frac{1}{p_4} \alpha_{\rho}(x_t) - (4B+4\rho G)p_{2,t}\\
    &Y_t = \sum_{s=1}^t X_s,
\end{align*}
where $\mathcal{I}\{\cdot\}$ is the indicator function. 
\end{definition}

Now, we can show that $\{Y_t\}_t$ is a super-martingale with respect to filtration $\mathcal{F}_{t}$. To this end, we need to show that for any $t$ and any possible $\mathcal{F}_{t-1}$, $\ex{Y_t - Y_{t-1}|\mathcal{F}_{t-1}} \le 0$, i.e., $\ext{\bar{d}_t} \le \frac{1}{p_4} \ext{\alpha_{\rho}(x_t)} + (4B+4\rho G)p_{2,t}$. For $\mathcal{F}_{t-1}$ such that $E^{est}$ holds, we already obtained the required inequality as in Eq.~\eqref{eq:bar_dt}. For $\mathcal{F}_{t-1}$ such that $E^{est}$ does not hold, the required inequality trivially holds since the LHS is zero. Now, we turn to show that $\{Y_t\}_t$ is a bounded incremental sequence, i.e., $|Y_t - Y_{t-1}| \le M_t$ for some constant $M_t$.  We first note that 
\begin{align*}
    |Y_t - Y_{t-1}| &= |X_t| \le |\bar{d}_t| + \frac{1}{p_t}\alpha_{\rho}(x_t) + (4B+4\rho G)p_{2,t}\\
    &= |\bar{d}_t| + \frac{1}{p_4}\left(c_{f,t}\sigma_{t-1}(x_t) + \rho c_{g,t} \widetilde{\sigma}_{t-1}(x_t)\right) +  (4B+4\rho G)p_{2,t}\\
    &\lep{a} (4B+4\rho G) + \frac{1}{p_4} (c_{f,t} + \rho c_{g,t}) + (4B+4\rho G)p_{2,t} \\
    &\le \frac{1}{p_4}(c_{f,t}+\rho c_{g,t})(4B+4\rho G):= M_t,
\end{align*}
where (a) holds since $\bar{d}_t \le d_t \le (4B+4\rho G)$, $\sigma_{t-1}(x_t) \le \sigma_0(x_t) \le 1$ and $\widetilde{\sigma}_{t-1}(x_t) \le \widetilde{\sigma}_0(x_t) \le 1$. Thus, we can apply Azuma-Hoeffding inequality to obtain that with probability at least $1-\alpha$,
\begin{align*}
    \sum_{t=1}^T \bar{r}_t &\le \sum_{t=1}^T \frac{1}{p_4}\alpha_{\rho}(x_t) + \sum_{t=1}^T (4B+4\rho G)p_{2,t} + \sqrt{2\ln(1/\delta) \sum_{t=1}^T M_t^2 }\\
    &\lep{a} \frac{1}{p_4}\sum_{t=1}^T \alpha_{\rho}(x_t) + C'(4B+4\rho G) + \frac{(c_f(T)+\rho c_g(T))(4B+4\rho G)}{p_4}\sqrt{2T\ln(1/\delta)},
\end{align*}
where (a) we have used the boundedness condition.
Note that since $E^{est}$ holds with probability at least $1- p_1$ for all $t$ and $x$. By a union bound, we have with probability at least $1-\alpha - p_1$,
\begin{align}
    \sum_{t=1}^T {d}_t &\le \frac{1}{p_4}\sum_{t=1}^T \alpha_{\rho}(x_t) + C'(4B+4\rho G) + \frac{(c_{f}(T)+\rho c_{g}(T))(4B+4\rho G)}{p_4}\sqrt{2T\ln(1/\delta)}\nonumber\\
    &=O\left(\frac{1}{p_4}\sum_{t=1}^T (c_f(T)\sigma_{t-1}(x_t) + \rho c_g(T)\widetilde{\sigma}_{t-1}(x_t)) +  \frac{(c_f(T)+\rho c_g(T))\kappa }{p_4}\sqrt{2T\ln(1/\delta)}\right)\nonumber\\
    &=O\left(\frac{1}{p_4}c_f(T)\sqrt{T\gamma_T}  + \frac{1}{p_4}\rho c_g(T)\sqrt{T\widetilde{\gamma}_T} +  \frac{(c_f(T)+\rho c_g(T))\kappa }{p_4}\sqrt{2T\ln(1/\delta)}\right),\label{eq:dt_final}
\end{align}
where $\kappa:= 4B+4\rho G$. Plugging~\eqref{eq:dt_final} into~\eqref{eq:t1t2_proof}, we obtain that 
\begin{align*}
    \mathcal{T}_1 + \mathcal{T}_2 &\le O\left(\frac{1}{p_4}c_f(T)\sqrt{T\gamma_T}  + \frac{1}{p_4}\rho c_g(T)\sqrt{T\widetilde{\gamma}_T} +  \frac{(c_f(T)+\rho c_g(T))\kappa }{p_4}\sqrt{2T\ln(1/\delta)}\right)\\
    &:= \chi(T,\phi).
\end{align*}
Note that here $\chi(T,\phi)$ is independent of $\phi$ since we have bounded it by $\rho$ in the analysis. Finally, plugging $\chi(T,\phi)$ into~\eqref{eq:regret_final} and~\eqref{eq:cons_vio_fina} yields the results of Theorem~\ref{thm:rand_exp}.
\end{proof}

\section{Conclusion}
We studied kernelized bandits with unknown soft constraints to attain a finer complexity-regret-constraint trade-off. To this end, we presented a general framework for constrained KB with soft constraints via primal-dual optimization. Armed with our developed sufficient condition, this framework not only allows us to design provably efficient (i.e., sublinear reward regret and sublinear total constraint violation) CKB algorithms with both UCB and TS explorations, but presents a unified method to design new effective ones. By introducing slackness, our algorithm can also attain a bounded or even zero constraint violation while still achieving a sublinear regret.
We further perform simulations on both synthetic data and real-world data that corroborate our theoretical results. 
Along the way, we also present the first detailed discussion on two existing methods for analyzing constrained bandits and MDPs by highlighting interesting insights.
One interesting future work is to generalize our results to kernelized MDPs~\citep{yang2020function}. 

\bibliographystyle{unsrtnat}
\bibliography{main,isit,aaai}

\newpage

\begin{appendix}

\section{Flexible Implementations of RandGP-UCB}
\label{app:rand}
In this section, we will give more insights on the choices of $\hat{\mathcal{D}}$, i.e., sampling distribution for $\hat{Z}_t$. In particular, we consider the unconstrained case for useful insights with black-box function being $f$. By the definition of RandGP-UCB, for each $t$, the estimate under RandGP-UCB is given by 
\begin{align*}
   f_t(x) = {\mu}_{t-1(x)} + {Z}_t \sigma_{t-1}(x),
\end{align*}
where $Z_t \sim \mathcal{D}$. 
First, by Lemma~\ref{lem:gp_concen}, we have with high probability 
\begin{align*}
    f(x) \le \mu_{t-1} + \beta_t \sigma_{t-1}(x),
\end{align*}
which directly implies that in order to guarantee $E_t^{anti}$ happens with a positive probability, one needs to make sure that $\mathbb{P}(Z_t \ge \beta_t) \ge p_3 > 0$. Thus, one simple choice of $\mathcal{D}$ is a  uniform discrete distribution between $[0,2\beta_t]$ with $N$ points. Then, it can be easily checked that $\pt{E_t^{anti}} \ge p_3 >0$ and also $\pt{E_t^{conc}} = 1$ with $c_{f,t}^{(2)} = 2\beta_t$. In addition to uniform discrete distribution, one can also use discrete Gaussian distribution within a range $[L,U]$ as long as $U$, $L$ are properly chosen. Of course, there are many other choices as long as the insight shown above is satisfied, and hence RandGP-UCB provides a lot of flexibility in the algorithm design.

\section{Details on Heavy-Tailed Real-World Data}\label{app:finance}

This dataset is the adjusted closing price of 29 stocks from January 4th, 2016 to April 10th 2019.  We use it in the context of identifying the most profitable stock in a given pool of stocks.  As verified in~\cite{chowdhury2019bayesian}, the rewards follows from heavy-tailed distribution.  We take the empirical mean of stock prices as our objective function $f$
and empirical covariance of the normalized stock prices as our kernel function $k$. The noise is estimated by taking the difference between the raw prices and its empirical mean (i.e., $f$), with $R$ set as the maximum. The constraint is given by $g(\cdot) = - f(\cdot) + h$ with $h = 100$ (i.e., $h \approx B/2$). 
We perform $50$ trials (each with $T= 10,000$) and plot the mean along with the error bars.

\section{Proof of Lemma~\ref{lem:delta_drift}}
\label{app:proof_delta}
\begin{proof}
Note that by the update rule of the virtual queue in Algorithm~\ref{alg:Lya} and non-expansiveness of projection, we have \begin{align*}
    \Delta(t) \le  Q(t)(\bar{g}_t(x_{t}) + \epsilon) + \frac{1}{2}\left(\bar{g}_t(x_t) + \epsilon\right)^2.
\end{align*}
Now we will bound the RHS as follows. 
\begin{align*}
     &Q(t)(\bar{g}_t(x_{t}) + \epsilon) + \frac{1}{2}\left(\bar{g}_t(x_t) + \epsilon\right)^2\\
     \lep{a}&Q(t)(\bar{g}_t(x_{t} ) + \epsilon) + \frac{1}{2}(G+\epsilon)^2\\
     =& -V \bar{f}_t(x_{t} ) + Q(t)\bar{g}_t(x_{t})  + Q(t)\epsilon + V\bar{f}_t(x_{t}) +  \frac{1}{2}(G+\epsilon_{t})^2\\
     \lep{b}& - V\int_{x\in\mathcal{X}} \bar{f}_t(x) \pi(x) \,dx +  Q(t) \int_{x\in \mathcal{X}} \bar{g}_t(x) \pi(x) \,dx+Q(t)\epsilon + V\bar{f}_t(x_{t}) +  \frac{1}{2}(G+\epsilon)^2,
\end{align*}
where (a) holds by the boundedness of $\bar{g}_t$; (b) holds by the greedy selection in Algorithm~\ref{alg:Lya}. Reorganizing the term, yields the required result.
\end{proof}

\section{Subtlety in Applying Hajek Lemma to Constraint Violation}
\label{app:subtlety}
As stated before, the key step behind the constraint violation is to establish a negative drift of the virtual queue and then by Hajek lemma, one can show that the virtual queue is bounded in expectation, which in turn can be used to establish a zero constraint violation with a proper choice of slackness variable (i.e., $\epsilon$) in the virtual queue update. However, the negative drift condition in the standard Hajek lemma (cf. Lemma 11 in~\cite{liu2021efficient}) requires a conditional expectation, i.e., condition on all large enough $Q$, the expected drift is negative. Then, if one directly applies the standard Hajek lemma, she would proceed as follows. The goal is to show that  $\ex{\Delta(t) \mid Q(t) = Q} \le -c Q$ for all large $Q$ and $c$ is some positive constant. Recall the bound on $\Delta(t)$ in~\eqref{eq:T3}, by the boundedness and let $\pi = \pi_0$, the key is to show that 
\begin{align}
    \ex{ \int_{x\in \mathcal{X}} \bar{g}_t(x) \pi_0(x) \,dx + \epsilon \mid Q(t) = Q} \le -c.
\end{align}
To illustrate the idea, we simply suppose that the Slater's condition is satisfied at a single point $x_0$ and $\epsilon = 0$. To show the above inequality, she may choose the following direction.
\begin{align*}
   \ex{\bar{g}_t(x_0) | Q(t) = Q} = \underbrace{\ex{\bar{g}_t(x_0) - g(x_0) | Q(t) = Q}}_{\text{Term (i)}} + \underbrace{\ex{g(x_0) | Q(t) = Q}}_{\text{Term (ii)}} \le -c.
\end{align*}


For $\text{Term (ii)}$, it is easily bounded by $\text{Term (ii)} \le -\delta$ via Slater's condition since $g(\cdot)$ is a fixed function. To bound $\text{Term (i)}$, she may resort to the standard self-normalized inequality for linear bandits and the definition of UCB exploration (cf.~\cite{abbasi2011improved}). By these standard results, she can show that for any fixed $\alpha \in (0,1]$, the following holds:
\begin{align}
\label{eq:prob_opt}
    \mathbb{P}\{\forall x, \forall t, \bar{g}_t(x) \le g(x) \} \ge 1-\alpha.
\end{align}
 That is, $\bar{g}_t$ is optimistic with respect to $g$. Then, by setting $\alpha = 1/T$ and using the boundedness assumption of both $\bar{g}_t$ and $g$, she may conclude that  $\text{Term (i)} = O(1/T)$.
 Unfortunately, the bound on $\text{Term (i)}$ is ungrounded since it is obtained by treating the conditional expectation in $\text{Term (i)}$ as an unconditional expectation. The subtlety here is that one cannot remove the condition on $Q(t)$ in $\text{Term (i)}$, since $\bar{g}_t$ is \emph{not} independent of $Q(t)$ as both of them depend on the randomness before time $t$. Given a particular $Q(t)$, it roughly means that we are taking expectation conditioned on a particular history (i.e., a sample-path). Under this particular history,~\eqref{eq:prob_opt} does not necessarily hold, and moreover, the concentration of $\bar{g}_t$ given $Q(t)$ is hard to compute in this case. As a result, the conditional expectation for $\text{Term (i)}$ is hard to compute in general. 
 
 
 
 \vspace{6pt}
 \textbf{One correct way.} Instead of applying the standard expected version of Hajek lemma, one can consider removing the expectation in Hajek lemma by directly showing that $\bar{g}_t(x_0) \le -c$ almost surely under the ``good event''. This is exactly the approach used in \citep{liu2021learning} (cf. Lemma 5.6). In this way, one can show that with a high probability (i.e., under good event), a negative drift exists and hence the constraint violation bound with high probability.
 
\end{appendix}

\end{document}